\def\Figref#1{Figure~\ref{#1}}
\def\twoFigref#1#2{Figures \ref{#1} and \ref{#2}}
\def\eqref#1{equation~(\ref{#1})}
\def\Eqref#1{Equation~(\ref{#1})}
\def\plainref#1{(\ref{#1})}
\def\1{\bm{1}}
\def\vtheta{{\bm{\theta}}}
\def\valpha{{\bm{\alpha}}}
\def\vphi{{\bm{\phi}}}
\def\vh{{\bm{h}}}
\def\vw{{\bm{w}}}
\def\vx{{\bm{x}}}
\def\vy{{\bm{y}}}
\def\mK{{\bm{K}}}
\def\mPhi{{\bm{\Phi}}}
\DeclareMathAlphabet{\mathsfit}{\encodingdefault}{\sfdefault}{m}{sl}
\SetMathAlphabet{\mathsfit}{bold}{\encodingdefault}{\sfdefault}{bx}{n}
\def\gC{{\mathcal{C}}}
\def\gD{{\mathcal{D}}}
\def\gH{{\mathcal{H}}}
\def\gK{{\mathcal{K}}}
\def\gL{{\mathcal{L}}}
\def\gR{{\mathcal{R}}}
\def\sI{{\mathbb{I}}}
\newcommand{\E}{\mathbb{E}}
\newcommand{\R}{\mathbb{R}}
\DeclareMathOperator*{\argmin}{arg\,min}
\def\<#1,#2>{\langle #1,\,#2\rangle}
\newtheorem{theorem}{Theorem}
\newtheorem{definition}{Definition}
\newtheorem{lemma}{Lemma}
\newtheorem{proposition}{Proposition}
\newtheorem{corollary}{Corollary}[theorem]
\theoremstyle{remark}
\newtheorem{remark}{Remark}
\def\BibTeX{{\rm B\kern-.05em{\sc i\kern-.025em b}\kern-.08em
    T\kern-.1667em\lower.7ex\hbox{E}\kern-.125emX}}
\begin{document}
\title{DeepNNK: Explaining deep models and their generalization using polytope interpolation
\thanks{Our work was supported by DARPA's LwLL program (FA8750-19-2-1005).}
}

\author{\IEEEauthorblockN{Sarath Shekkizhar, 
 Antonio Ortega}
\IEEEauthorblockA{\textit{Department of Electrical and Computer Engineering} \\
\textit{University of Southern California}\\
Los Angeles, CA, USA \\
shekkizh@usc.edu, aortega@usc.edu}
}
\fancyhead{}
\renewcommand{\headrulewidth}{0pt}
\lfoot{Submitted for review at ICPR 2020}
\rfoot{}
\maketitle

\begin{abstract}
Modern machine learning systems based on neural networks have shown great success in learning complex data patterns while being able to make good predictions on unseen data points. 
However, the limited interpretability of these systems hinders further progress and application to several domains in the real world. This predicament is exemplified by time consuming model selection and the difficulties faced in predictive explainability, especially in the presence of adversarial examples. 
In this paper, we take a step towards better understanding of neural networks by introducing a local polytope interpolation method. The proposed Deep Non Negative Kernel regression (NNK) interpolation framework is non parametric, theoretically simple and geometrically intuitive. We demonstrate instance based explainability for deep learning models and develop a method to identify models with good generalization properties using leave one out estimation. Finally, we draw a rationalization to adversarial and generative examples which are inevitable from an interpolation view of machine learning.
\end{abstract}

\begin{IEEEkeywords}
Neural networks, polytope interpolation, interpretability, generalization, leave one out, stability.
\end{IEEEkeywords}

\section{Introduction}
The goal of any learning system is to identify a mapping from input data space to output classification or regression space based on a finite set of training data with a basic generalization requirement: 
Models trained to perform well on a given dataset (empirical performance) should perform well on future examples (expected performance), i.e., the gap between expected and empirical performance must be small. 

Today, deep neural networks are at the core of several recent advances in machine learning. 
An appropriate deep architecture is closely tied to the dataset on which it is trained and is selected with significant manual engineering by practitioners or by random search based on \emph{subjective heuristics} \cite{blum2015Ladder}.
Approaches based on resubstitution (training) error, which is often near zero in deep learning systems, can be misleading, while held out data (validation) metrics introduce possible selection bias and the data they use might be more valuable if it can be used to train the model\cite{anders1999model}. 
However, these methods have steadily improved state of the art metrics on several datasets even though only limited understanding of generalization is available \cite{recht2018cifar} and generally it is not known whether a  smaller model trained for fewer epochs could have achieved the same performance \cite{castelvecchi2016can}.

A model is typically said to suffer from overfitting when it performs poorly to test (validation) data while performing well on the training data. The conventional approach to avoid overfitting with error minimization is to avoid training an over-parameterized model to zero loss, for example by penalizing the training process with methods such as weight regularization or early stopping \cite{hastie2009elements, murphy2012machine}. 
This perspective has been questioned by recent research, 
which has shown that a model with a number of parameters several orders of magnitude bigger than the dataset size, and trained to zero loss, generalizes to new data as well as a constrained model \cite{zhang2016understanding}. 
Thus, while conventional wisdom about interpolating estimators \cite{hastie2009elements, murphy2012machine} is that they can achieve zero training error but generally exhibit poor generalization, 
Belkin and others \cite{belkin2018overfitting, belkin2018does} propose and theoretically study some specific interpolation-based methods, such as simplicial interpolation and kernel weighted and interpolated nearest neighbors~(wiNN), that can achieve generalization with theoretical guarantees.    
\cite{belkin2018overfitting} suggests that  neural networks perform interpolation in a transformed space and that this could help explain their generalization performance. 
Though this view has spurred renewed interest in interpolating estimators\cite{liang2018just, hastie2019surprises}, 
there have been no studies of interpolation based classifiers \textit{integrated} with a complete neural network. 
This is due in part to their complexity: working with $d$-simplices \cite{belkin2018overfitting} would be impractical if the dimension of the data space $d$ is high, as is the case for problems of interest where neural networks are used. In contrast, a simpler method such as wiNN does not have the same geometric properties as the simplex approach. 

In this paper, we propose a practical and realizable interpolation framework based on local polytopes obtained using Non Negative Kernel regression (NNK)\cite{shekkizhar2020} on neural network architectures. 
As shown in a simple setup in \Figref{fig:interpolation_difference}, a simplicial interpolation, even when feasible, constrains itself to a simplex structure (triangles in $\R^2$) around each test query, which leads to an arbitrary choice of the containing simplex when data lies on one of the simplicial faces. Thus, in the example of \Figref{fig:interpolation_difference} only one of the triangles can be used, and only two out of the 4 points in the neighborhood contribute to the interpolation. 
This situation becomes increasingly common in high dimensions, worsening interpolation complexity. 
By relaxing the simplex constraint, one can better formulate the interpolation using generalized convex polytope structures, such as those obtained using NNK, that are dependent on the sampled training data positions in the classification space. While our proposed method uses $k$ nearest neighbors (KNN) as a starting point,  
it differs from other KNN-based approaches, such as wiNN schemes \cite{devroye1998hilbert, biau2015lectures, belkin2018overfitting} and DkNN \cite{Papernot2018, wallace2018interpreting}.
In particular, these KNN based algorithms can be potentially biased if data instances have different densities in different directions in space. Instead, as shown in  \Figref{fig:KRI_plane} NNK automatically selects data points most influential to interpolation based on their relative position, i.e., only those neighboring representations that provide new (orthogonal) information for data reconstruction are selected for functional interpolation.
In summary, our proposed method combines some of the best features of existing methods, providing a geometrical interpretation and performance guarantees as the simplicial interpolation  \cite{belkin2018overfitting}, with much lower complexity, of an order of magnitude comparable to  KNN-based schemes.
\begin{figure*}[htbp]
    \centering
    \begin{subfigure}{0.35\textwidth}
    \centering
    \includegraphics[width=0.9\textwidth]{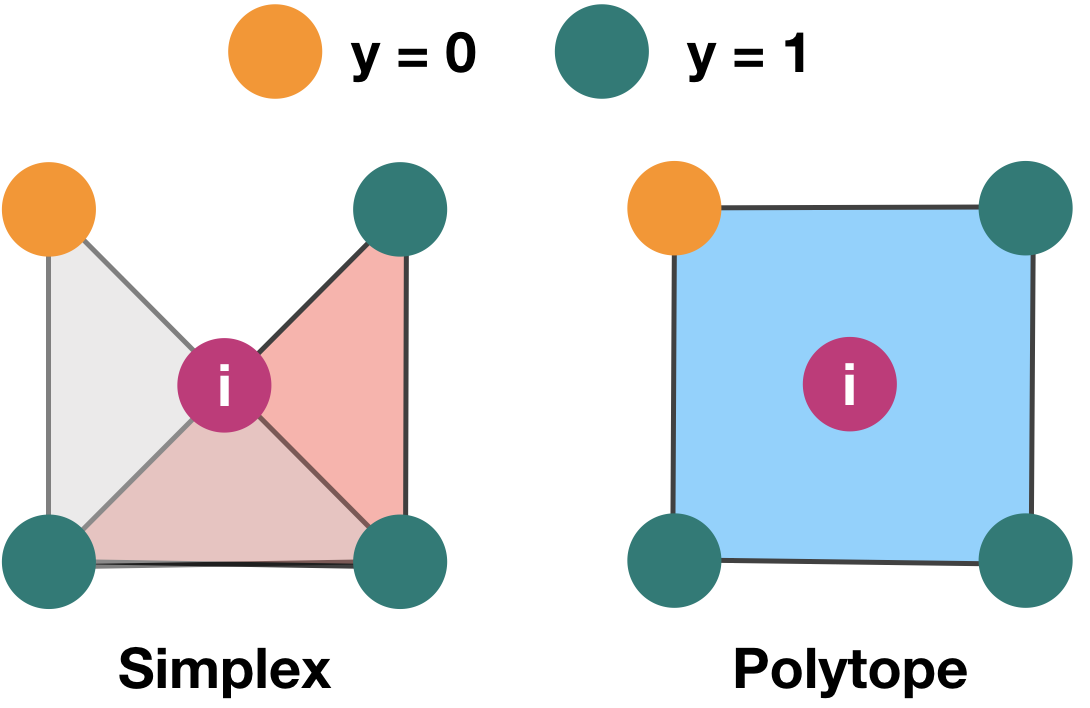}
    \caption{}
    \label{fig:interpolation_difference}
    \end{subfigure}
    \begin{subfigure}{0.33\textwidth}
    \centering
    \includegraphics[width=0.9\textwidth]{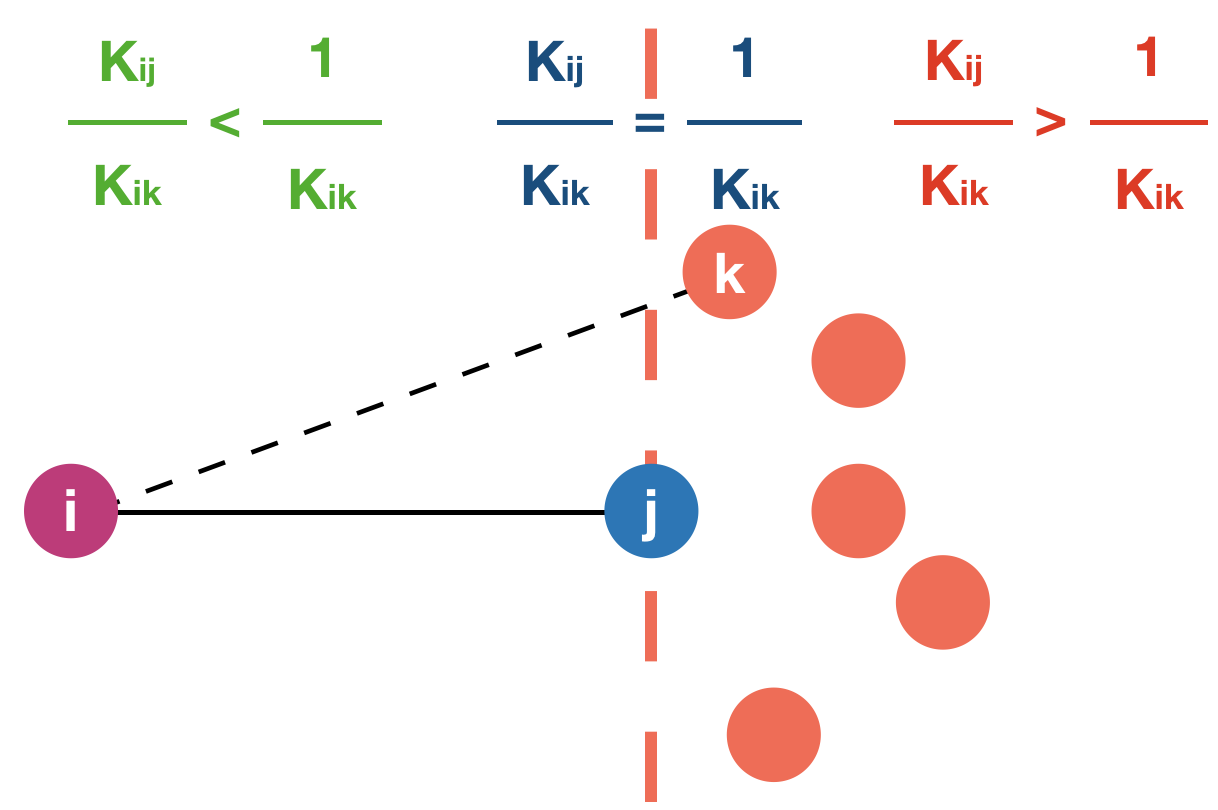}
    \caption{}
    \label{fig:KRI_plane}
    \end{subfigure}
    \begin{subfigure}{0.25\textwidth}
    \centering
    \includegraphics[width=0.9\textwidth]{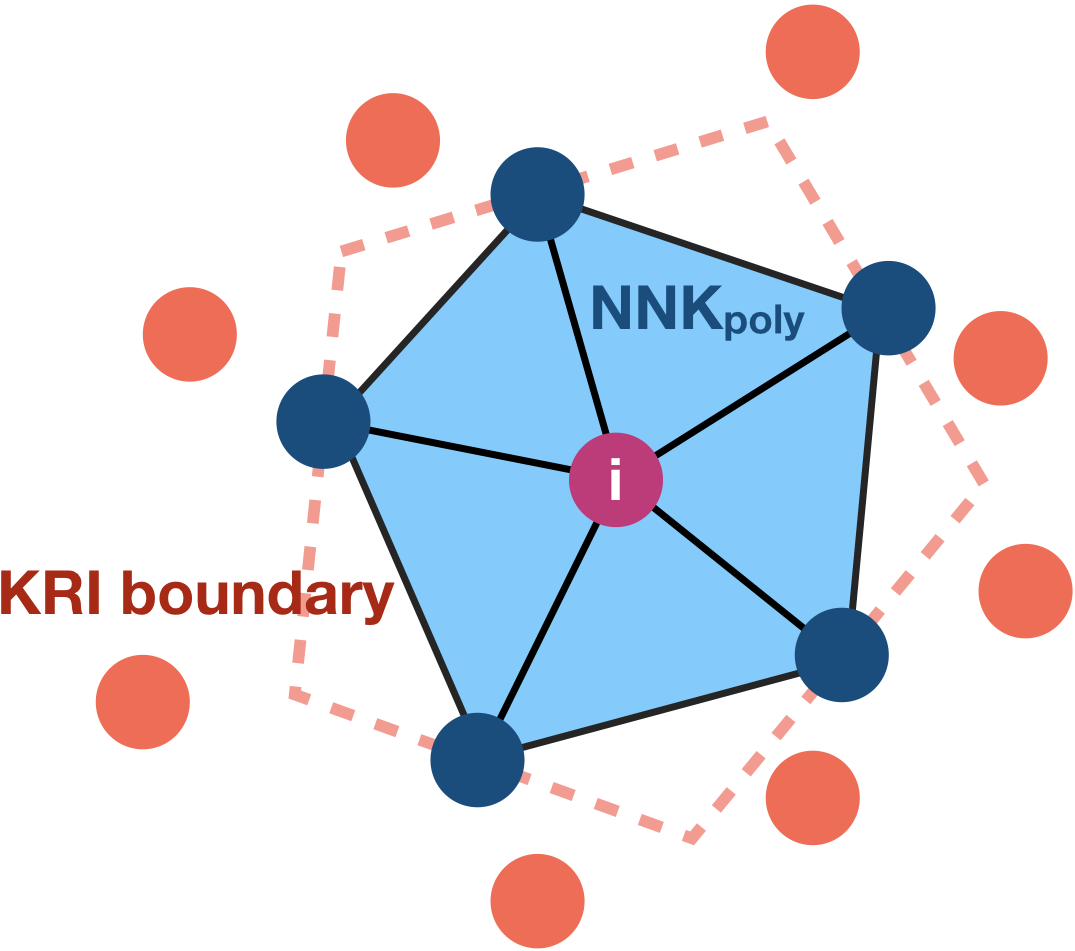}
    \caption{}
    \label{fig:KRI_polytope}
    \end{subfigure}
    \caption{(a) Comparison of simplicial and polytope interpolation methods. In the simplex case, the label for  node $\vx_i$ can be approximated based on different triangles (simplices), one of which must be chosen. With the chosen triangle two out of the three points are used for interpolation, so that in this example only half the neighboring points are used for interpolation. 
    Instead, polytope interpolation based on NNK is based on all four data points, which together form a polytope. (b) KRI plane (dashed orange line) corresponding to chosen neighbor $\vx_j$. Data points to the right of this plane will be not be selected by NNK as neighbors of $\vx_i$. (c) KRI boundary associated convex polytope formed by NNK neighbors at $\vx_i$. }
\end{figure*}

To integrate our interpolation framework with a neural network, we replace 
the final classification layer, typically some type of  support vector machine (SVM) with our NNK interpolator during evaluation at training and at test time, 
while relying on the loss obtained with the original SVM-like layer for backpropagation.
This strategy of using a different classifier at final layer is not uncommon in deep learning\cite{koh2017understanding, lundberg2017unified, bontonou2019introducing} and is motivated by the intuition that each layer of a neural network corresponds to an abstract transformation of the input data space catered to the machine task at hand.
Note that, unlike the explicit parametric boundaries defined in general by an SVM-like final layer, local interpolation methods produce boundaries that are implicit, i.e., based on the relative positions of the training data in a transformed space. 
In other words, the proposed DeepNNK procedure allows us to characterize the network by the output classification space rather than relying on a global boundary defined on the space.

Equipped with these tools, we tackle model selection in neural networks from an interpolation perspective using data dependent stability: A model is stable for training set $\gD$ if any change of a single point in $\gD$ does not affect (or yields very small change in) the output hypothesis \cite{devroye1979distribution, yu2013stability}. 
This definition is similar but different from algorithmic stability obtained using jackknifing \cite{bousquet2002stability, mukherjee2006learning} and related statistical procedures such as cross validation \cite{anders1999model}. While the latter is related to using repeatedly the entire training dataset but one for computing many estimators that are combined at the end, the former is concerned with the output estimate at a point not used for its prediction and is the focus of study in our work.
Direct evaluation of algorithmic stability in the context of deep learning is impractical for two reasons: First, the increased runtime complexity associated with training the algorithm for different sets. Second, even if computationally feasible, the assessment within each setting is obscured due to randomness in training, for example in weight initialization and batch sampling, which requires repeated evaluation to reduce variance in the performance.
Unlike these methods \cite{anders1999model, bousquet2002stability}, by focusing on stability to input perturbations at interpolation, our method achieves a practical methodology not involving repetitive training for model selection.

Another challenging issue that prevents the application of deep neural networks in sensitive domains, such as medicine and defense, is the absence of explanations to a prediction obtained\cite{doshi2017towards}. 
Explainability or interpretability can be defined as the degree to which a human can understand or rationalize a prediction obtained from a learning algorithm. 
A model is more interpretable than another model if its decisions are easier for a human to comprehend, for e.g., a health care technician looking at a flagged retinal scan\cite{de2018clinically}, than decisions from the other model.
Example based explanations can help alleviate the problem of interpretability by allowing humans to understand complex predictions by analogical reasoning with a small set of influential training instances \cite{kim2016examples, koh2017understanding}.

Our proposed DeepNNK classifier is a neighborhood based approach that makes very few modeling assumptions on data. Each prediction in NNK classifier comes with a set of training data points (neighbors selected by NNK) that interpolate to produce the classification/regression. 
In contrast to earlier methods such as DkNN\cite{Papernot2018, wallace2018interpreting} that rely on hyperparameters such as $k, \epsilon$ which directly impact explainability and confidence characterizations, our approach adapts to the local data manifold by identifying a stable set of training instances that most influence an estimate. 
Further, the gains in interpretability using our framework do not incur a penalty in performance,
so that, unlike earlier methods \cite{koh2017understanding, Papernot2018}, there is no loss in overall performance by using an interpolative last layer, and some cases there are gains, as compared to the the performance achieved with standard SVM-like last layer classifiers. Indeed, we demonstrate performance improvements over standard architectures with SVM-like last layers in case where there is overfitting. 

Finally, this paper presents some empirical explanation to generative and adversarial examples, which have gained growing attention in modern machine learning. We show that these instances fall in distinct interpolation regions surrounded by fewer NNK neighbors on average compared to real images.


\section{Preliminaries and Background}
\subsection{Statistical Setup}
The goal of machine learning is to find a function $\hat{f}: X \rightarrow Y$ that minimizes the probability of error on samples drawn from the joint distribution over $X \times Y$ in $\R^d \times [0, 1]$. Assume $\mu$ to be the marginal distribution of $X \in \R^d$ with its support denoted as $supp(\mu)$. Let $\eta$ denote the conditional mean $\E(Y|X=\vx)$. 
The risk or error associated with a predictor in a regression setting is given by $\gR_{gen}(\hat{f}) = \E[\gR(\hat{f}, \vx)] = \E[(\hat{f}(\vx) - y)^2]$. 
The Bayes estimator obtained as the expected value of the conditional is the best predictor and upper bounds other predictors as $\E[\gR(\hat{\eta}, \vx) - \gR(\eta, \vx)] \leq \E[(\hat{\eta}(\vx) - \eta(\vx))^2]$. 
Unfortunately, the joint distribution is not known \textit{a priori} and thus a good estimator is to be designed based on labelled samples drawn from $X \times Y$ in the form of training data $D_{train} = \{(\vx_{1}, y_1), (\vx_{2}, y_2) \dots (\vx_N, y_N)\}$. Further, assume each $y_i$ is corrupted by i.i.d.~noise and hence can deviate from the Bayes estimate $\eta\{\vx_i\}$.
For a binary classification problem, the domain of $Y$ is reduced to $\{0,1\}$, with the plug-in Bayes classifier defined as $f^* = \sI(\eta(\vx)>1/2)$ where $\eta(\vx) = P(Y=1|X=\vx)$. The risk associated to a classifier is defined as $\gR_{gen}(\hat{f}) = \E[\gR(\hat{f}, \vx)] = \E[P(\hat{f}(\vx) \neq y)]$ and is related to the Bayes risk as $\E[\gR(\hat{f}, \vx) - \gR(f^*(\vx), \vx)] \leq \E[P(\hat{f}(\vx) \neq f^*(\vx))]$.
Note that the excess risk associated to $\hat{f}$ in both regression and classification setting is related to $\E[(\hat{\eta}(\vx) - \eta(\vx))^2]$ and $\E[P(\hat{f}(\vx) \neq f^*(\vx))]$, and is the subject of our work. Note that the generalization risk defined above is dependent on the data distribution while in practice one uses empirical error, defined as $\gR_{emp}(\gD_{train}) = \frac{1}{N} \sum_i l(\hat{\eta}(\vx_i),y)$ where $l(\hat{\eta}(\vx_i), y)$ is the error associated in regression or classification setting.
We denote by $\gD_{train}^i$ the training set obtained by removing the point $(\vx_i, y_i)$ from $\gD_{train}$.

\subsection{Deep Kernels}
Given data $\gD = \{\vx_1, \vx_2 \dots \vx_N\}$, kernel based methods observe similarities in a non linearly transformed feature space $\gH$ referred to as the Reproducing Kernel Hilbert Space (RKHS)\cite{aronszajn1950theory}. One of the key ingredients in kernel machines is the \emph{kernel trick}: Inner products in the feature space can be efficiently computed using kernel functions. Due to the non linear nature of the data mapping, linear operations in RKHS correspond to non linear operations in the input data space.

\begin{definition}
If $\gK: \R^d \times \R^d \rightarrow \R$ is a continuous symmetric kernel of a positive integral operator in $\gL_2$ space of functions,
then there exists a space $\gH$ and mapping $\vphi: \R^d \rightarrow \gH$ such that by Mercer's theorem 
\begin{align*}
    \gK(\vx_i, \vx_j) = \langle\vphi(\vx_i), \vphi(\vx_j)\rangle
\end{align*}
where $\langle \cdot , \cdot \rangle$ denotes the inner product.
\end{definition}
Kernels satisfying above definition are known as Mercer kernels and have wide range of  applications in machine learning \cite{hofmann2008kernel}. In this work, we center our experiments around the range normalized cosine kernel defined as,
\begin{align}
    \gK(\vx_i, \vx_j) = \frac{1}{2} \left(1 + \frac{\langle\vx_i, \vx_j\rangle}{\|\vx_i\|\;\|\vx_j\|}\right) \label{eq:base_cosine_kernel}
\end{align}
though our theoretical statements and claims make no assumption on the type of kernel, other than it be positive with range $[0,1]$.
Similar to \cite{wilson2016deep}, we combine kernel definitions with neural networks to incorporate the expressive power of neural networks. Given a kernel function $\gK$, we transform the input data using the non linear mapping $\vh_\vw$ corresponding to deep neural networks (DNN) parameterized by $\vw$. 
\begin{align}
    \gK(\vx_i, \vx_j) \rightarrow \gK_{DNN}(\vh_\vw(\vx_i), \vh_\vw(\vx_j))
\end{align}
Our normalized cosine kernel of \eqref{eq:base_cosine_kernel} is rewritten as
\begin{align}
    \gK_{DNN}(\vx_i, \vx_j) = \frac{1}{2} \left(1 + \frac{\langle\vh_\vw(\vx_i), \vh_\vw(\vx_j)\rangle}{\|\vh_\vw(\vx_i)\|\;\|\vh_\vw(\vx_j)\|}\right) \label{eq:deep_cosine_kernel}
\end{align}

\subsection{Non Negative Kernel regression (NNK)}
The starting point for our interpolation-based classifier is our previous work  on graph construction using non negative kernel regression (NNK) \cite{shekkizhar2020}. NNK formulates graph construction as a signal representation problem, where each data point is to be approximated by a weighted sum of functions from a dictionary formed by its neighbors. The NNK objective for graph construction can be written as:
\begin{align}
 \min_{\vtheta \geq 0} \;
 \|\vphi_i - \mPhi_S\vtheta\|^2 \label{eq:nnk_lle_objective}
\end{align}
where $\vphi_i$ is a lifting of $\vx_i$ from observation to similarity space and $\mPhi_S$ contains the transformed neighbors.

Unlike $k$ nearest neighbor approaches, which select neighbors having the $k$ largest inner products  $\vphi_i^\top\vphi_j$ and can be viewed as a thresholding-based representation, NNK is an improved basis selection procedure in kernel space leading to a stable and robust representation. 
Geometrically, NNK can be characterized in the form of kernel ratio interval (KRI) as shown in \twoFigref{fig:KRI_plane}{fig:KRI_polytope}. The KRI theorem states that for any positive definite kernel with range in $[0, 1]$ (e.g. the cosine kernel \plainref{eq:deep_cosine_kernel}), the necessary and sufficient condition for two data points $\vx_j$ and $\vx_k$ to be {\em both} NNK neighbors to $\vx_i$ is
\begin{align}
\mK_{j,k} < \frac{\mK_{i,j}}{\mK_{i,k}} < \frac{1}{\mK_{j,k}}. 
\label{eq:kernel_ratio_interval}
\end{align}
Inductive application of the KRI produces a closed decision boundary around the data to be approximated ($\vx_i$) with the identified neighbors forming a convex polytope around the data ($NNK_{poly}$). 
Similar to the simplicial interpolation of \cite{belkin2018overfitting}, the local geometry of our NNK  classifier can be leveraged to obtain theoretical performance of bounds as discussed next. 

\section{Local Polytope Interpolation}
\label{sec:polytop_interpolation}
In this section, we propose and analyze a polytope interpolation scheme based on local neighbors\footnote{All proofs related to theoretical statements in this section are included in the supplementary material} that asymptotically approaches the $1$-nearest neighbor algorithm \cite{cover1967nearest}. Like $1$-nearest neighbor, the proposed method is not statistically consistent in the presence of label noise, but, unlike the former, it's risk can be studied in the non-asymptotic case with data dependent bounds under mild assumptions on smoothness. 
\begin{proposition}
\label{prop:nnk_interpolation}
Given $k$ nearest neighbors of a sample $\vx$, $S = \{(\vx_{1}, y_1), (\vx_{2}, y_2) \dots (\vx_{k}, y_k)\}$, the following NNK estimate at $\vx$ is a valid interpolation function:
\begin{align}
\hat{\eta}(\vx) = \E(Y | X = \vx) = \sum_{i=1}^{\hat{k}}\vtheta_{i}\;y_i,  \label{eq:biased_nnk_conditional_estimate}
\end{align}
where $\vtheta$ are the $\hat{k}$ non zero weights obtained from the minimization of 
 \eqref{eq:nnk_lle_objective}, that is: 
\begin{align}
\vtheta &= \min_{\vtheta \geq 0} \|\vphi(\vx) - \mPhi_S\vtheta\|^2 \nonumber \\
&= \min_{\vtheta \geq 0}\; 1 - 2\vtheta^\top\mK_{S,*} + \vtheta^\top\mK_{S,S}\vtheta \label{eq:nnk_kernel_objective}
\end{align}
where $\mPhi_S = [\vphi(\vx_1) \dots \vphi(\vx_k)]$ corresponds to the kernel space representation of the nearest neighbors with $\mK_{S,*}$ denoting the kernel similarity with regards to $\vx$.
\end{proposition}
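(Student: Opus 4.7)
My plan is to verify that the estimator in \eqref{eq:biased_nnk_conditional_estimate} qualifies as an interpolator by establishing two ingredients: (i) the weights $\vtheta$ are well-defined and non-negative, so that $\hat{\eta}$ is a proper non-negative combination of the neighboring labels, and (ii) the estimator reproduces labels at training points, i.e., $\hat{\eta}(\vx_j) = y_j$ whenever the query $\vx$ coincides with a neighbor $\vx_j \in S$. Together these establish the defining property of an interpolation function, while the sparsity $\hat{k} \le k$ of the nonzero weights comes for free from the NNK/KRI neighbor selection already recalled in the preliminaries.

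I would first unfold \eqref{eq:nnk_kernel_objective} as $\|\vphi(\vx) - \mPhi_S \vtheta\|^2 = \gK(\vx,\vx) - 2\vtheta^\top \mK_{S,*} + \vtheta^\top \mK_{S,S}\vtheta$, and observe that by Mercer's theorem $\mK_{S,S}$ is positive semidefinite, so the objective is a convex quadratic over the closed convex cone $\{\vtheta \ge 0\}$. Since the objective is a squared RKHS norm it is bounded below by $0$, so a global minimum exists, and non-negativity of every coordinate of the minimizer is enforced by the feasible set. For part (ii), set $\vx = \vx_j$ for some $\vx_j \in S$ and evaluate the objective at the feasible point $\vtheta = \ve_j$: this gives $\|\vphi(\vx_j) - \vphi(\vx_j)\|^2 = 0$, exactly matching the lower bound. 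Hence $\ve_j$ is a global minimizer and $\hat{\eta}(\vx_j) = \sum_i \vtheta_i\, y_i = y_j$, which is precisely the interpolation property.

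The main obstacle I anticipate is the non-uniqueness of the minimizer when $\mK_{S,S}$ is only PSD (not strictly PD): in principle several non-negative $\vtheta^*$ could reach the same optimal objective value and yield different numeric values of $\hat{\eta}$ through different reweightings of the $y_i$. The KRI characterization of NNK in \eqref{eq:kernel_ratio_interval} resolves this, since the active support selected by NNK corresponds to lifted neighbors $\{\vphi(\vx_i)\}$ whose non-negative conic combinations place $\vphi(\vx)$ at a unique point on the local polytope boundary; this gives a unique optimal $\vtheta$ restricted to the active support, and hence a well-defined value of $\hat{\eta}(\vx)$. This is where I expect the argument to lean hardest on machinery from the prior NNK work rather than on direct calculation.
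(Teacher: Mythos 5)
Your argument is internally coherent, but it proves a different (and weaker) property than the one the paper's proof establishes, so you should be aware of the mismatch. You read ``valid interpolation function'' as ``$\hat{\eta}$ reproduces labels when the query coincides with a neighbor,'' and your verification of that reading is essentially fine: the objective in \eqref{eq:nnk_kernel_objective} is a convex quadratic over the cone $\{\vtheta\ge 0\}$, bounded below by zero, and at $\vx=\vx_j$ the feasible point $\ve_j$ attains the bound, hence $\hat{\eta}(\vx_j)=y_j$ whenever the minimizer is unique. (Your worry about non-uniqueness is legitimate, but it is settled not by the KRI inequality \eqref{eq:kernel_ratio_interval} but by the characterization, cited in the paper's supplementary material, that the non-zero support $P$ of the solution is exactly the set for which $\mK_{P,P}$ is full rank and $\vtheta_P>0$; KRI by itself does not give uniqueness of $\vtheta$.)

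What the paper actually proves is an identity valid at an \emph{arbitrary} query $\vx$, not just at training points: using the active-set lemma from the prior NNK work, the optimizer of \eqref{eq:nnk_kernel_objective} has closed form $\vtheta_P=\mK_{P,P}^{-1}\mK_{P,*}$ on its support, and therefore
\begin{align*}
\sum_{i=1}^{\hat{k}}\vtheta_i\,y_i \;=\; \vy_P^\top\mK_{P,P}^{-1}\mK_{P,*} \;=\; \tilde{\valpha}^\top\vphi(\vx),
\qquad \tilde{\valpha}=\mPhi_P\mK_{P,P}^{-1}\vy_P ,
\end{align*}
where $\tilde{\valpha}^\top\vphi(\cdot)$ is the kernel-space linear least-squares estimator that exactly fits the labels at the $\hat{k}$ selected neighbors. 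In other words, the weights obtained from the purely geometric reconstruction problem $\min_{\vtheta\ge 0}\|\vphi(\vx)-\mPhi_S\vtheta\|^2$ (which never sees the labels) coincide with the weights of the RKHS interpolant of the labels over the selected support, evaluated at $\vphi(\vx)$; that coincidence is the content of the proposition and is what licenses calling $\hat{\eta}$ an interpolation function off the training set. Your proof never touches this identity, and for a generic $\vx\notin S$ it gives no reason why $\sum_i\vtheta_i y_i$ should be regarded as interpolating anything. To close the gap you would need the explicit active-set solution $\vtheta_P=\mK_{P,P}^{-1}\mK_{P,*}$ and the commutation step $\vy_P^\top\mPhi_P^\top\mK_{P,P}^{-1}\vphi(\vx)=\vy_P^\top\mK_{P,P}^{-1}\mPhi_P^\top\vphi(\vx)$, which is where the paper's proof does its real work.
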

The interpolator from Proposition \ref{prop:nnk_interpolation} is biased and can be bias-corrected by normalizing the interpolation weights.
Thus, the unbiased NNK interpolation estimate is obtained as 
\begin{align}
\hat{\eta}(\vx) = \E(Y | X = \vx) = \sum_{i=1}^{\hat{k}}\frac{\vtheta_{i}}{\sum_{j=1}^{\hat{k}} \vtheta_j}\;y_i \label{eq:nnk_conditional_estimate}
\end{align}
In other words, NNK starts with a crude approximation of neighborhood in the form of $k$ nearest neighbors, but instead of directly using these points as sources of interpolation, optimizes and reweighs the selection (most of which are zero) using \eqref{eq:nnk_kernel_objective} to obtain a stable set of neighbors.
\subsection{A general bound on NNK classifier}
We present a theoretical analysis based on the simplicial interpolation analysis by \cite{belkin2018overfitting} but adapted to the proposed NNK interpolation. We first study NNK framework in a regression setting and then adapt the results for classification. Let $D_{train} = \{ (\vx_{1}, y_1), (\vx_{2}, y_2) \dots (\vx_N, y_N)\}$ in $\R^d\times [0,1]$ be the training data made available to NNK. Further, assume each $y_i$ is corrupted by independent noise and hence can deviate from the Bayes estimate $\eta(\vx_i)$.
\begin{theorem}
\label{thm:excess_mean_sq_risk}
For a conditional distribution $\hat{\eta}(\vx)$ obtained using unbiased NNK interpolation given training data $D_{train} = \{ (\vx_{1}, y_1), (\vx_{2}, y_2) \dots (\vx_N, y_N)\}$ in $\R^d\times [0, 1]$, the excess mean square risk is given by
\begin{align}
    \E[(\hat{\eta}(\vx) - \eta(\vx))^2 | D_{train}]  \leq \E[\mu(\R^d \backslash \gC)] + A^2\E[\delta^{2\alpha}] \nonumber\\
    + \frac{2A'}{\E_K[\hat{k}]+1}\E[\delta^{\alpha'}] + \frac{2}{\E_K[\hat{k}]+1}\E[(Y - \eta(\vx))^2] \label{eq:excess_sq_risk}
\end{align}
under the following assumptions
\begin{enumerate}
    \item $\mu$ is the marginal distribution of $X \in \R^d$. Let $\gC = \text{Hull}(\vphi(\vx_1), \vphi(\vx_2) \dots \vphi(\vx_N))$ be the convex hull of the training data  in transformed kernel space.
    \item The conditional distribution $\eta$ is Holder $(A, \alpha)$ smooth in kernel space.
    \item Similarly, the conditional variance $var(Y|X=\vx)$ satisfies $(A', \alpha')$ smoothness condition.
    \item Let $NNK_{poly}(\vx)$ denote the convex polytope around $\vx$ formed by $\hat{k}$ neighbors identified by NNK with non zero weights. The maximum diameter of the polytope formed with NNK neighbors for any data in $\gC$ is represented as $\delta = \max_{\vx \in \gC} \text{diam}(NNK_{poly}(\vx))$.
\end{enumerate}
\end{theorem}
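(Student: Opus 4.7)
The plan is to adapt the proof strategy of Belkin--Hsu--Mitra for simplicial interpolation to the data-driven polytopes produced by NNK, using the normalized NNK weights $\tilde\theta_i = \theta_i/\sum_j \theta_j$ in place of barycentric coordinates.

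I would first split on whether the test point $\vx$ lies in the convex hull $\gC$ of the lifted training data. Since $\hat\eta,\eta \in [0,1]$, the event $\{\vx \notin \gC\}$ contributes at most $\E[\mu(\R^d \setminus \gC)]$, which is exactly the first term of the bound. What remains is to control the squared error on $\{\vx \in \gC\}$, where \eqref{eq:nnk_conditional_estimate} expresses the estimator as a convex combination $\hat\eta(\vx) = \sum_{i=1}^{\hat k} \tilde\theta_i\, y_i$ supported on the polytope $NNK_{poly}(\vx)$.

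Conditioning on $\vx$ and the training locations, I would apply a bias--variance decomposition with respect to the independent label noise. Since $\sum_i \tilde\theta_i = 1$ and $\E[y_i] = \eta(\vx_i)$, the bias is $\sum_i \tilde\theta_i (\eta(\vx_i) - \eta(\vx))$; because every NNK neighbor $\vx_i$ lies inside $NNK_{poly}(\vx)$, whose diameter is at most $\delta$, the Holder $(A,\alpha)$ smoothness of $\eta$ in kernel space together with Jensen's inequality yields the bias-squared term $A^2 \delta^{2\alpha}$. Independence of the noise also gives $\Var(\hat\eta(\vx)) = \sum_i \tilde\theta_i^2 \,\Var(Y \mid X = \vx_i)$, and the $(A',\alpha')$ smoothness of the conditional variance upgrades this to $\bigl(\sum_i \tilde\theta_i^2\bigr)\bigl(\Var(Y\mid X=\vx) + A' \delta^{\alpha'}\bigr)$, with $\Var(Y\mid X=\vx) = \E[(Y-\eta(\vx))^2 \mid X=\vx]$.

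The main obstacle is converting $\sum_i \tilde\theta_i^2$ into the factor $2/(\E_K[\hat k]+1)$ that appears in the last two terms. For simplicial interpolation this is the closed-form identity $\E[\sum_i \tilde\theta_i^2] = 2/(k+1)$ for uniform barycentric coordinates on a simplex, but NNK weights are the solution of a non-negative least-squares problem and carry no such explicit distribution. I would attempt this either by using the KRI condition \eqref{eq:kernel_ratio_interval} to characterize the active set and bound $\sum_i \tilde\theta_i^2$ via the spectrum of the restricted kernel submatrix $\mK_{S,S}$, or by comparing the NNK solution to uniform weights on the same polytope and transferring the second-moment identity through the geometry guaranteed by the KRI. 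Combining the outside-hull mass with the bias and variance contributions and taking outer expectations over $\vx$ and $D_{train}$ then assembles the stated inequality; everything besides the control of $\sum_i \tilde\theta_i^2$ reduces to routine Holder-smoothness and Jensen manipulations.
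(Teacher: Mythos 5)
Your decomposition is exactly the one the paper uses: split on $\vx \in \gC$ versus $\vx \notin \gC$, bound the outside-hull contribution by $\E[\mu(\R^d\setminus\gC)]$ using boundedness of $\eta,\hat\eta$ on $[0,1]$, then perform a bias--variance decomposition $\hat\eta(\vx)-\eta(\vx) = \sum_i w_i\epsilon_i + \sum_i w_i b_i$ with the normalized weights $w_i=\vtheta_i/\sum_j\vtheta_j$, control the bias term by Jensen plus Holder smoothness to get $A^2\delta^{2\alpha}$, and control the noise term through $\sum_i w_i^2$ together with the $(A',\alpha')$ smoothness of the conditional variance. Up to that point your argument and the paper's coincide step for step.

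The one place you stop short is the place you yourself flag: converting $\sum_i w_i^2$ into the factor $2/(\E_K[\hat k]+1)$. You propose two routes (a spectral bound via the KRI-characterized active set, or a comparison to uniform weights on the same polytope) but carry out neither, so as written the proof is incomplete at exactly the step that produces the denominators in the last two terms of the bound. For what it is worth, the paper takes neither of your routes: it simply asserts that the normalized NNK weights follow a $\mathrm{Dirichlet}(1,\dots,1)$ distribution over the $\hat k$ selected neighbors, from which $\E[w_i^2] = 2/\bigl((\hat k+1)\hat k\bigr)$ and hence the $2/(\E_K[\hat k]+1)$ factor follow exactly as in the simplicial case of Belkin et al. That assertion is itself a modeling step rather than a consequence of the NNK optimization --- the weights are a deterministic non-negative least-squares solution, not barycentric coordinates of a uniformly random point in a simplex --- so your instinct that this is the genuinely delicate part is correct. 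To match the stated theorem you must either adopt the same Dirichlet modeling of the weights, or actually establish $\E[\sum_i w_i^2] \le 2/(\E_K[\hat k]+1)$ by one of the arguments you only sketch.
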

\begin{remark}
Theorem \plainref{thm:excess_mean_sq_risk} provides a non-asymptotic upper bound for the excess squared risk associated with unbiased NNK interpolation using a data dependent bound. The first term in the bound is associated to extrapolation, where the test data falls outside the interpolation area for the given training data while the last term corresponds to label noise. Of interest are the second and third terms, which merely reflect the dependence of the interpolation on the size of each polytope defined for test data and the associated smoothness of the labels over this region of interpolation. In particular, when all test samples are covered by a smaller polytope, the corresponding risk is closer to optimal. 
Note that NNK approach leads to a polytope having smallest diameter or volume for the number of points ($\hat{k}$) selected from a set of $k$ neighbors. 
From the theorem, this corresponds to a better risk bound. 
The bound associated with simplicial interpolation is a special case, where each simplex enclosing the data point is a fixed $\hat{k}$, corresponding to a $(d+1)$-sized polytope. Thus, in our approach the number of points forming the polytope is variable (dependent on local data topology), while in the simplicial case it is fixed and depends on the dimension of the space. 
Though the latter bound seems better (excess risk is inversely related to $\hat{k}$), the diameter of the polytope~(simplex) increases with $d$ making the excess risk possibly sub optimal.
\end{remark}
\begin{corollary}
\label{coroll:excess_mean_sq_risk_convergence}
Based on an additional assumption that $supp(\mu)$ belongs to a simple polytope, the excess mean square risk converges asymptotically as 
\begin{align}
    \limsup_{N\rightarrow\infty} \E[(\hat{\eta}(\vx) - \eta(\vx))^2] \leq \E[(Y - \eta(\vx))^2]
\end{align}
\end{corollary}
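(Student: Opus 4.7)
The plan is to obtain the corollary by taking $\limsup_{N\to\infty}$ of the non-asymptotic bound already established in Theorem \ref{thm:excess_mean_sq_risk} and controlling each of the four terms separately. Three of the four terms will be shown to vanish, while the coefficient multiplying the noise term will be shown to be at most $1$, so the only surviving contribution is the irreducible noise $\E[(Y-\eta(\vx))^2]$.

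First I would handle the extrapolation term $\E[\mu(\R^d\setminus\gC)]$. Because $supp(\mu)$ is assumed to be a simple polytope, it is the convex hull of a finite vertex set. A standard density argument shows that, with probability one, training samples accumulate arbitrarily close to each vertex as $N\to\infty$, so the convex hull $\gC$ of the embedded training points converges in Hausdorff distance to the whole support. Consequently $\mu(\R^d\setminus\gC)\to 0$ almost surely, and since this quantity is bounded by $1$, bounded convergence gives the same limit in expectation.

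Next I would control the two diameter-dependent terms $A^2\E[\delta^{2\alpha}]$ and $\frac{2A'}{\E_K[\hat{k}]+1}\E[\delta^{\alpha'}]$. A standard nearest-neighbor argument shows that the diameter of the $k$ nearest-neighbor set around any $\vx\in supp(\mu)$ shrinks to $0$ as $N\to\infty$, and since the NNK polytope at $\vx$ is contained in the convex hull of those $k$ neighbors, $\delta\to 0$ almost surely. Holder smoothness of $\eta$ and of the conditional variance, together with the fact that $\delta$ is deterministically bounded by the diameter of the compact support, lets one apply dominated convergence to conclude $\E[\delta^{2\alpha}]\to 0$ and $\E[\delta^{\alpha'}]\to 0$. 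For the noise term, I would use that NNK always returns at least one selected neighbor, i.e.~$\hat{k}\ge 1$, since the nearest neighbor carries positive weight at the optimum of \eqref{eq:nnk_kernel_objective}; hence $\frac{2}{\E_K[\hat{k}]+1}\le 1$ uniformly in $N$. Taking $\limsup$ of the bound in Theorem \ref{thm:excess_mean_sq_risk}, the first three terms disappear and at most $\E[(Y-\eta(\vx))^2]$ remains, which is the claim.

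The main obstacle is the second step: one needs the diameter decay to be uniform enough that the \emph{expectation}, not merely the almost-sure limit, vanishes. The simple-polytope hypothesis is exactly what makes this tractable, since it supplies a deterministic bound on $\delta$ by $\text{diam}(supp(\mu))$, enabling dominated convergence. A subsidiary concern is that for atypical training configurations the KRI-based selection rule might not contract at the same rate as the underlying $k$ nearest-neighbor radius; however the inclusion $NNK_{poly}(\vx)\subseteq\text{Hull}(k\text{-NN}(\vx))$ guarantees that the standard $k$-NN rate is always a valid upper bound on $\delta$, so no finer geometric argument is required.
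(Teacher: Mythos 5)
Your proposal is correct and follows the same overall strategy as the paper: take the $\limsup$ of the four-term bound in Theorem~\ref{thm:excess_mean_sq_risk}, kill the extrapolation term via the simple-polytope assumption, kill the two $\delta$-dependent terms via nearest-neighbor shrinkage, and leave only the noise term. The one place you genuinely diverge is the noise coefficient $\tfrac{2}{\E_K[\hat{k}]+1}$. The paper invokes the Cover--Hart nearest-neighbor convergence lemma to argue that the NNK solution \emph{collapses to $1$-nearest-neighbor interpolation} in the limit, i.e.\ $\E_K[\hat{k}]\to 1$, so the coefficient tends to exactly $1$. You instead observe that $\hat{k}\ge 1$ always (the nearest neighbor is always selected with positive weight), so the coefficient is at most $1$ uniformly in $N$. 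Your argument is weaker but entirely sufficient for the stated inequality, and it has the advantage of not requiring the somewhat delicate claim that the KRI selection rule asymptotically prunes down to a single neighbor --- a claim the paper asserts rather than proves. You are also more careful than the paper on the passage from almost-sure decay of $\delta$ to decay of $\E[\delta^{2\alpha}]$ and $\E[\delta^{\alpha'}]$: the paper simply states $\limsup_{N\to\infty}\delta=0$, whereas you supply the dominated-convergence step using the deterministic bound $\delta\le \mathrm{diam}(supp(\mu))$ and the containment of $NNK_{poly}(\vx)$ in the hull of the $k$ nearest neighbors. Both of these refinements strengthen rather than alter the paper's argument.
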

\begin{remark}
The asymptotic risk of proposed NNK interpolation method is bounded like the $1$-nearest neighbor method in the regression setting by twice the Bayes risk.  The rate of convergence of proposed method is dependent on the convergence of the kernel functions centered at the data points. 
\end{remark}
We now extend our analysis to classification using the plug-in classifier $\hat{f}(\vx) = \sI(\hat{\eta}(\vx) > 1/2)$ for a given $D_{train} = \{ (\vx_{1}, y_1), (\vx_{2}, y_2) \dots (\vx_N, y_N)\}$ in $\R^d\times \{0,1\}$ using the relationship between classification and regression risk \cite{biau2015lectures}. 
\begin{corollary}
\label{coroll:classifier_risk_convergence}
A plug-in NNK classifier under the assumptions of Corollary \ref{coroll:excess_mean_sq_risk_convergence} has excess classifier risk bounded as 
\begin{align}
    \limsup_{N\rightarrow \infty} \E[\gR(\hat{f}(\vx)) - \gR(f(\vx))] \leq 2\sqrt{\E[(Y - \eta(\vx)^2]}
\end{align}
\end{corollary}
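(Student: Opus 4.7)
The plan is to reduce the classification risk bound to the regression risk bound of Corollary~\ref{coroll:excess_mean_sq_risk_convergence} via the standard plug-in inequality. The chain of reductions is classification excess risk $\to$ $L^1$ distance between $\hat{\eta}$ and $\eta$ $\to$ $L^2$ distance via Jensen $\to$ the asymptotic bound already established.

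First I would invoke the well-known pointwise plug-in bound for binary classification (see e.g.\ Devroye--Gy\"orfi--Lugosi, as cited via \cite{biau2015lectures} in the paper): for any estimator $\hat{\eta}$ and its plug-in classifier $\hat{f}(\vx) = \sI(\hat{\eta}(\vx) > 1/2)$, with Bayes classifier $f^*(\vx) = \sI(\eta(\vx) > 1/2)$, one has
\begin{align*}
  \gR(\hat{f}) - \gR(f^*) \;\le\; 2\,\E\bigl[\,|\hat{\eta}(\vx) - \eta(\vx)|\,\bigr].
\end{align*}
The proof of this is immediate by noting that a plug-in classifier errs relative to Bayes at $\vx$ only when $\hat{\eta}(\vx)$ and $\eta(\vx)$ lie on opposite sides of $1/2$, in which case $|2\eta(\vx) - 1| \le 2|\hat{\eta}(\vx) - \eta(\vx)|$. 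I would state this as a one-line lemma before applying it to our $\hat{\eta}$ from Proposition~\ref{prop:nnk_interpolation}.

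Next I would apply Jensen's inequality (equivalently, Cauchy--Schwarz against the constant function $1$) to pass from the $L^1$ to the $L^2$ error:
\begin{align*}
  \E\bigl[\,|\hat{\eta}(\vx) - \eta(\vx)|\,\bigr] \;\le\; \sqrt{\,\E\bigl[(\hat{\eta}(\vx) - \eta(\vx))^2\bigr]\,}.
\end{align*}
Combining these two inequalities gives a non-asymptotic version of the claim:
\begin{align*}
  \E[\gR(\hat{f}) - \gR(f^*)] \;\le\; 2\sqrt{\,\E\bigl[(\hat{\eta}(\vx) - \eta(\vx))^2\bigr]\,}.
\end{align*}
Finally I would take $\limsup_{N\to\infty}$ of both sides and apply Corollary~\ref{coroll:excess_mean_sq_risk_convergence} to the right-hand side; since the square root is continuous and monotone on $[0,\infty)$, the $\limsup$ commutes with it, yielding the stated bound $2\sqrt{\E[(Y - \eta(\vx))^2]}$.

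The main obstacle is largely bookkeeping rather than a deep technical point: I need to make sure that all expectations are interpreted correctly (the outer expectation is over both $\vx$ and the random training set $D_{\text{train}}$), and that Corollary~\ref{coroll:excess_mean_sq_risk_convergence} indeed bounds this joint expectation so that Jensen applies to the correct measure. A minor subtlety is that the classification setting restricts $Y\in\{0,1\}$, so one should verify that the smoothness/regularity assumptions inherited from the regression corollary remain meaningful here; since $\eta(\vx) = P(Y=1\mid X=\vx) \in [0,1]$ and $\text{var}(Y\mid X) = \eta(\vx)(1-\eta(\vx))$ is automatically bounded and typically smooth whenever $\eta$ is, this transfer is essentially free.
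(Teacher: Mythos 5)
Your proposal is correct and follows essentially the same route as the paper's proof: the plug-in reduction $\E[\gR(\hat f)-\gR(f)]\le 2\,\E[|\hat\eta(\vx)-\eta(\vx)|]$, followed by Jensen's inequality to pass to the $L^2$ error, and then Corollary~\ref{coroll:excess_mean_sq_risk_convergence} under the $\limsup$. Your added one-line justification of the plug-in inequality and the remark about expectations being taken jointly over $\vx$ and $D_{\mathrm{train}}$ are sound and, if anything, slightly more explicit than the paper's argument.
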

\begin{remark}
The classification bound presented here makes no assumptions on the margin associated to the classification boundary and is thus only a weak bound. The bound can be improved exponentially as in \cite{belkin2018overfitting} when more assumptions such as h-hard margin condition \cite{massart2006risk} are made.
\end{remark}
\subsection{Leave one out stability}
\label{subsec:loo_stability}
The leave one out~(LOO) procedure (also known as deleted estimate or U-method) is an important statistical measure with a long history in machine learning \cite{elisseeff2003leave}. Unlike empirical error, it is \emph{almost unbiased} \cite{luntz1969estimation} and has been often used for model (hyperparameter) selection. 
Formally, this is represented by
\begin{align}
    \gR_{loo}(\hat{\eta} | \gD_{train}) = \frac{1}{N}\sum_{i=1}^N l(\hat{\eta}(\vx_i)|\gD_{train}^i, y_i)
\end{align}
where the NNK interpolation estimator in the summation for $\vx_i$ is based on all training points except $\vx_i$.
We focus our attention to LOO in the context of model stability and generalization as defined in \cite{elisseeff2003leave, devroye2013probabilistic}. A system is stable when small perturbations (LOO) to the input data do not affect its output predictions i.e.,
$\hat{\eta}$ is $\beta(\hat{\eta})$ stable when
\begin{align}
    \E_\gD\left(|l(\hat{\eta}(\vx)| \gD, y) - l(\hat{\eta}(\vx)| \gD^i, y)|\right) \leq \beta(\hat{\eta} | \gD) \label{eq:stability_definition} 
\end{align}

Theoretical results by Rogers, Devroye and Wagner \cite{rogers1978finite, devroye1979distribution, devroye1979deleted}  about generalization of $k$-nearest neighbor methods using LOO performance are  very relevant to our proposed NNK algorithm. The choice of $k$ in our method is dependent on the relative positions of points and hence replaces the fixed $k$ from their results by expectation.
\begin{theorem}
\label{thm:loo_bound_theorem}
The leave one out performance of unbiased NNK classifier given $\gamma$, the maximum number of distinct points that can share the same nearest neighbor, is bounded as 
\begin{align*}
P(|\gR_{loo}(\hat{\eta}|\gD_{train}) - \gR_{gen}(\hat{\eta})| > \epsilon) &\leq 2e^{-N\epsilon^2/18} \;+\\
  &6e^{-N\epsilon^3/\left(108\E_K[\hat{k}](2 + \gamma)\right)}
\end{align*}
\end{theorem}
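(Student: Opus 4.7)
The plan is to port the classical Rogers--Devroye--Wagner leave-one-out concentration argument for $k$-nearest-neighbour classifiers to the NNK setting, with the fixed $k$ that appears in their constants replaced by the (random) active-set size $\hat{k}$ via its expectation $\E_K[\hat{k}]$. Throughout, write $Z_i = (\vx_i, y_i)$ and view the LOO risk $\gR_{loo}(\hat{\eta}\mid\gD_{train})$ as a symmetric function $\Phi(Z_1,\dots,Z_N)$ of i.i.d.\ inputs. I would first split
\begin{align*}
P(|\gR_{loo}-\gR_{gen}|>\epsilon) &\le P(|\gR_{loo}-\E\gR_{loo}|>\epsilon/2) \\
 &\quad + P(|\E\gR_{loo}-\gR_{gen}|>\epsilon/2),
\end{align*}
and treat the two pieces separately; they will give the $e^{-N\epsilon^2/18}$ term and the $e^{-N\epsilon^3/(108\E_K[\hat{k}](2+\gamma))}$ term respectively.

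For the first piece I would invoke McDiarmid's bounded-differences inequality. The stability lemma I need is: replacing a single $Z_j$ by an independent copy $Z_j'$ changes $\Phi$ by at most $c_j/N$, where $c_j$ counts the index $j$ itself plus those indices $i\neq j$ whose NNK prediction at $\vx_i$ on $\gD_{train}^{i}$ uses $\vx_j$ (or $\vx_j'$) as an active neighbour. Since NNK active neighbours form a subset of the initial KNN set, and since by the KRI characterisation \eqref{eq:kernel_ratio_interval} a fixed point can appear in at most $(2+\gamma)$ such neighbourhoods on average (this is where the hypothesis on $\gamma$ enters), a Fubini/symmetrisation argument controls $\sum_j \E[c_j^2]$ by a quantity proportional to $\E_K[\hat{k}](2+\gamma)N$; McDiarmid then yields the claimed $2\exp(-N\epsilon^2/18)$.

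For the bias-like second piece, I would follow Devroye--Wagner and write $\E\gR_{loo} - \gR_{gen}$ as an average of weakly correlated indicator variables, the correlation graph having maximum degree $O(\E_K[\hat{k}](2+\gamma))$ by the same neighbourhood-counting argument. A Chernoff/Bernstein-style inequality on sums of bounded variables with variance proxy $O(\E_K[\hat{k}](2+\gamma)/N)$ then produces a tail of the form $6\exp(-N\epsilon^3/(108\E_K[\hat{k}](2+\gamma)))$; the $\epsilon^3$ exponent (rather than $\epsilon^2$) is the standard artefact of using the weaker pair-correlation bound here instead of full independence.

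The main obstacle is the neighbourhood-counting step, because in NNK the polytope size $\hat{k}$ is adaptive and depends on the data, so one cannot simply reuse the deterministic $k$ of Devroye--Wagner. I would handle this by: (i) using the KRI \eqref{eq:kernel_ratio_interval} to show that the NNK selection is monotone in the kernel inner products, so that the classical ``a point is a nearest neighbour of at most $\gamma$ others'' geometry lifts to ``a point is an NNK neighbour of at most $(2+\gamma)$ others per active set''; and (ii) absorbing the randomness of $\hat{k}$ into an outer expectation via Jensen's inequality, yielding $\E_K[\hat{k}]$ as the effective constant. With these two ingredients, the remainder is a mechanical adaptation of the Devroye--Wagner computation.
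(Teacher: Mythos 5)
Your top-level strategy is the same as the paper's: the paper's entire proof consists of invoking Theorem~1 of Devroye--Wagner for the deleted (leave-one-out) estimate of the $k$-NN rule and substituting $\E_K[\hat{k}]$ for $k$, justified by precisely the counting observation you identify --- since the NNK active set is a subset of the initial $k$-NN candidate set, a given training point can serve as an NNK neighbour of at most $\E_K[\hat{k}]\gamma + 2 \leq \E_K[\hat{k}](2+\gamma)$ other points on average. So your key lemma is exactly the paper's, and at the level of detail the paper operates your proposal matches it.

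Where you go beyond the paper and try to reconstruct the Devroye--Wagner internals, two steps would not go through as written. First, your attribution of the two tail terms is inconsistent with the bound you are proving: you derive the first term $2e^{-N\epsilon^2/18}$ from a McDiarmid argument whose bounded differences you control by a quantity proportional to $\E_K[\hat{k}](2+\gamma)$, but McDiarmid with differences of that size produces an exponent of order $N\epsilon^2/\bigl(\E_K[\hat{k}](2+\gamma)\bigr)^2$, i.e.\ a $\gamma$-dependent term --- whereas the first term in the statement is free of both $\gamma$ and $\hat{k}$. (McDiarmid also needs deterministic, not in-expectation, difference bounds.) In Devroye--Wagner the $\gamma$-free $\epsilon^2$ term comes from a piece of the decomposition that does not involve the neighbour structure, and only the $\epsilon^3$ term carries the counting constant; your $\epsilon/2$ split likewise cannot reproduce the constants $18$ and $108$. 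Second, ``absorbing the randomness of $\hat{k}$ via Jensen'' is delicate: Jensen moves $\E_K$ the wrong way through $x \mapsto e^{-c/x}$ in the relevant regime, so the substitution $k \to \E_K[\hat{k}]$ has to be made where the count enters \emph{linearly} in the variance/moment computation (which is what the paper implicitly does), not pushed into the exponent after the fact. Neither issue changes the fact that your strategy is the paper's strategy, but both steps of the reconstruction would need repair.
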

\begin{remark}
NNK classifier weighs its neighbors based on RKHS interpolation but obtains the initial set of neighbors based on the input embedded space. This means the value of $\gamma$ in NNK setting is dependent on the dimension of the space of points where the data is embedded and not on the possibly infinite dimension of the RKHS. The above bound is difficult to compute in practice due to $\gamma$ but bounds do exist for this measure based on convex covering literature \cite{rogers1963covering, chen2005new}. The theorem allows us to relate stability of a model using LOO error to that of generalization. Unlike the bound based on hyperparameter $k$, the bound presented here is training data dependent due to the data dependent selection of neighbors.
\end{remark}
More practically, to characterize the smoothness in the classification surface, we introduce variation or spread in LOO interpolation score of the training dataset as 
\begin{align}
    \nabla(\vx_i) = \frac{1}{\hat{k}}\sum_{j = 1}^{\hat{k}}\left[\hat{\eta}(\vx_i) | \gD_{train}^i - \hat{\eta}(\vx_j)|\gD_{train}^j\right]^2  \label{eq:interpolation_score_spread}
\end{align}
where $\hat{k}$ is the number of non zero weighted neighbors identified by NNK and $\hat{\eta}(\vx)$ is the unbiased NNK interpolation estimate of \eqref{eq:nnk_conditional_estimate}.  A smooth interpolation region will have variation~$\nabla(\vx)$ in its region close to zero while a spread close to one corresponds to a noisy classification region. 
\section{Experiments}
\label{sec:experiments}
In this section, we present an experimental evaluation of DeepNNK for model selection, robustness and interpretability of neural networks.
We focus on experiments with CIFAR-10 dataset to validate our analysis and intuitions on generalization and  interpretability. 
\begin{figure*}[htbp]
    \centering
    \begin{subfigure}{\textwidth}
    \centering
    \includegraphics[trim={0cm, 14cm, 0cm, 0cm}, clip, width=\textwidth]{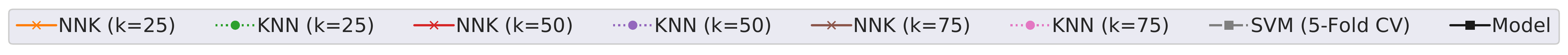}
    \end{subfigure}
    \begin{subfigure}{0.32\textwidth}
    \centering
    \includegraphics[ width=\textwidth]{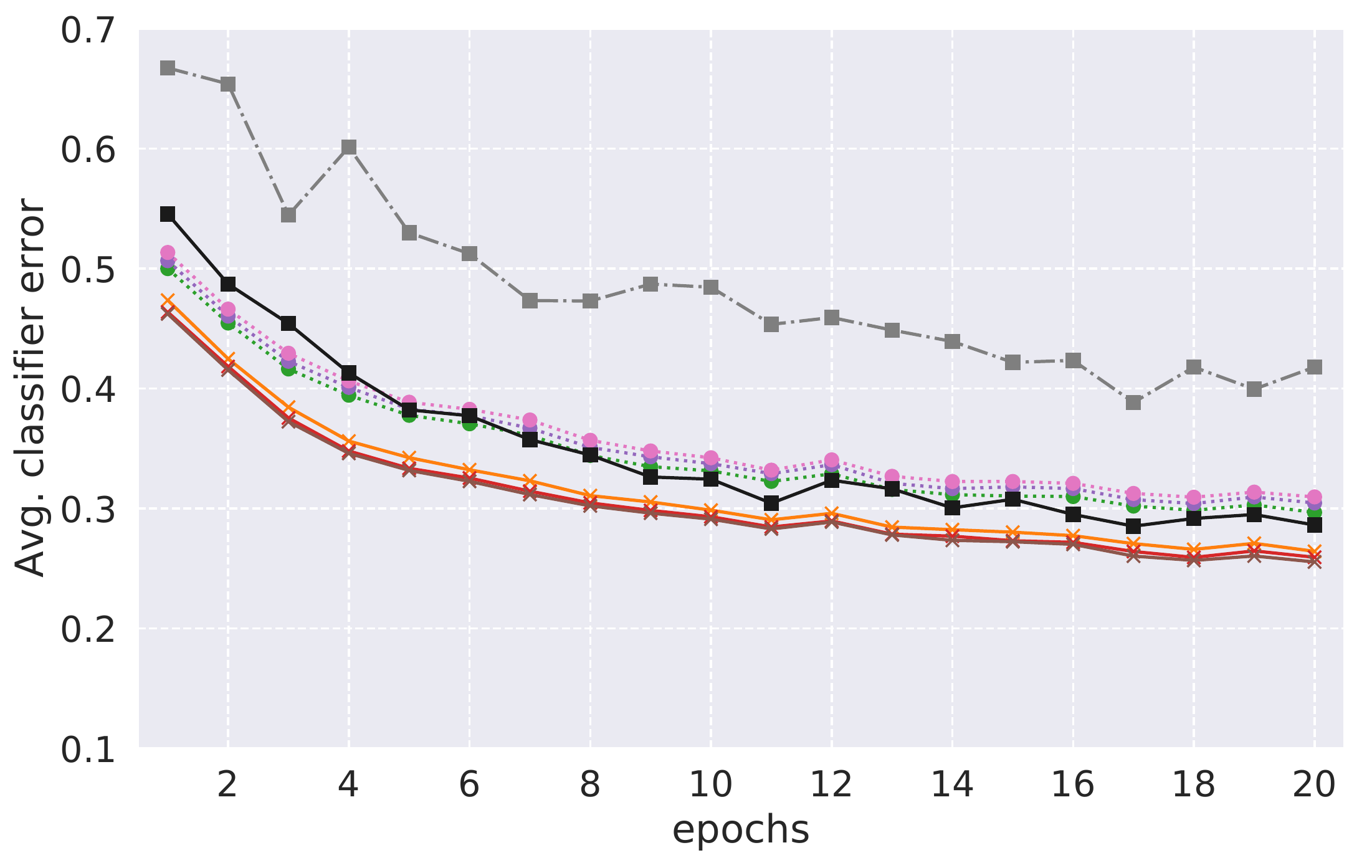}
    \end{subfigure}
    \begin{subfigure}{0.32\textwidth}
    \centering
    \includegraphics[width=\textwidth]{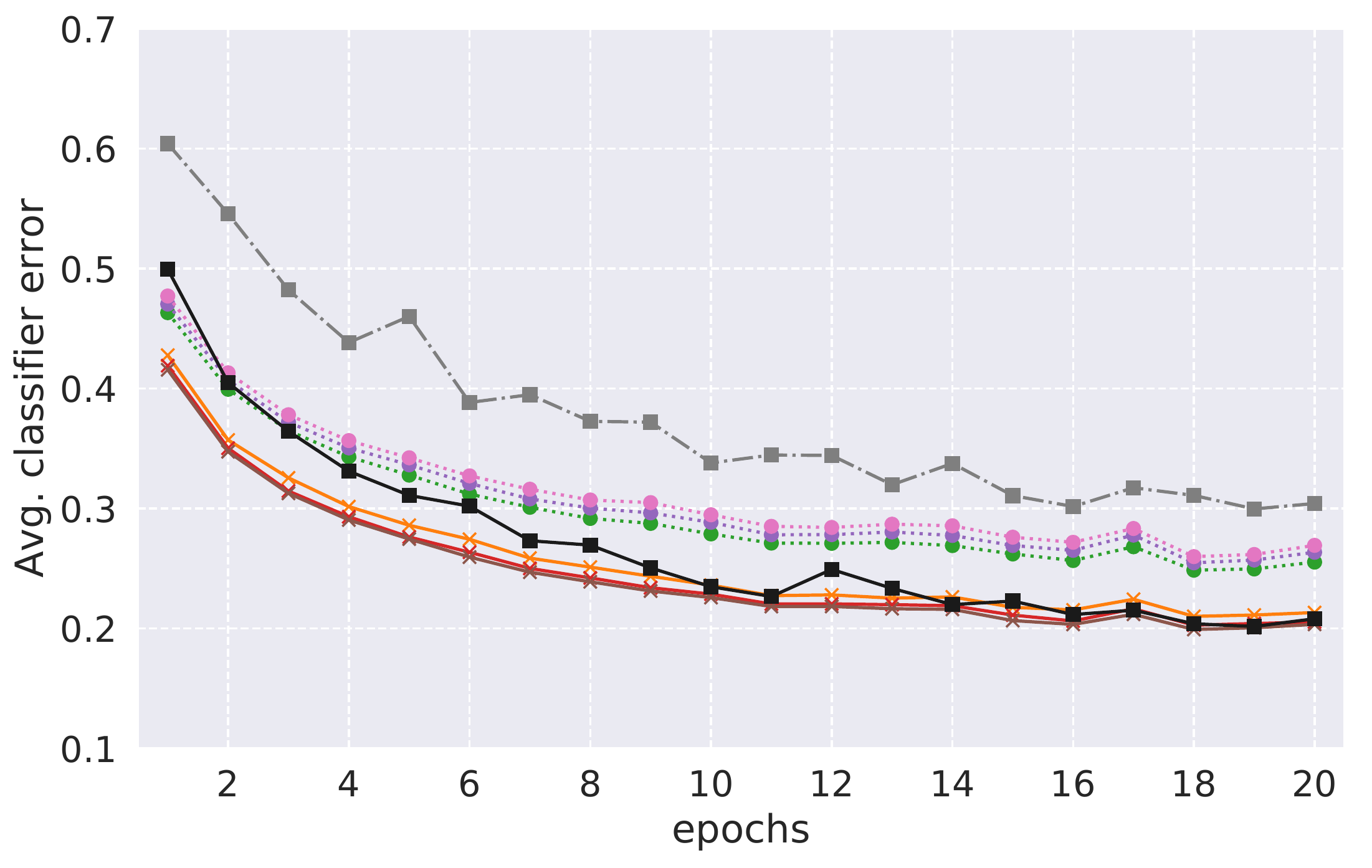}
    \end{subfigure}
    \begin{subfigure}{0.32\textwidth}
    \centering
    \includegraphics[width=\textwidth]{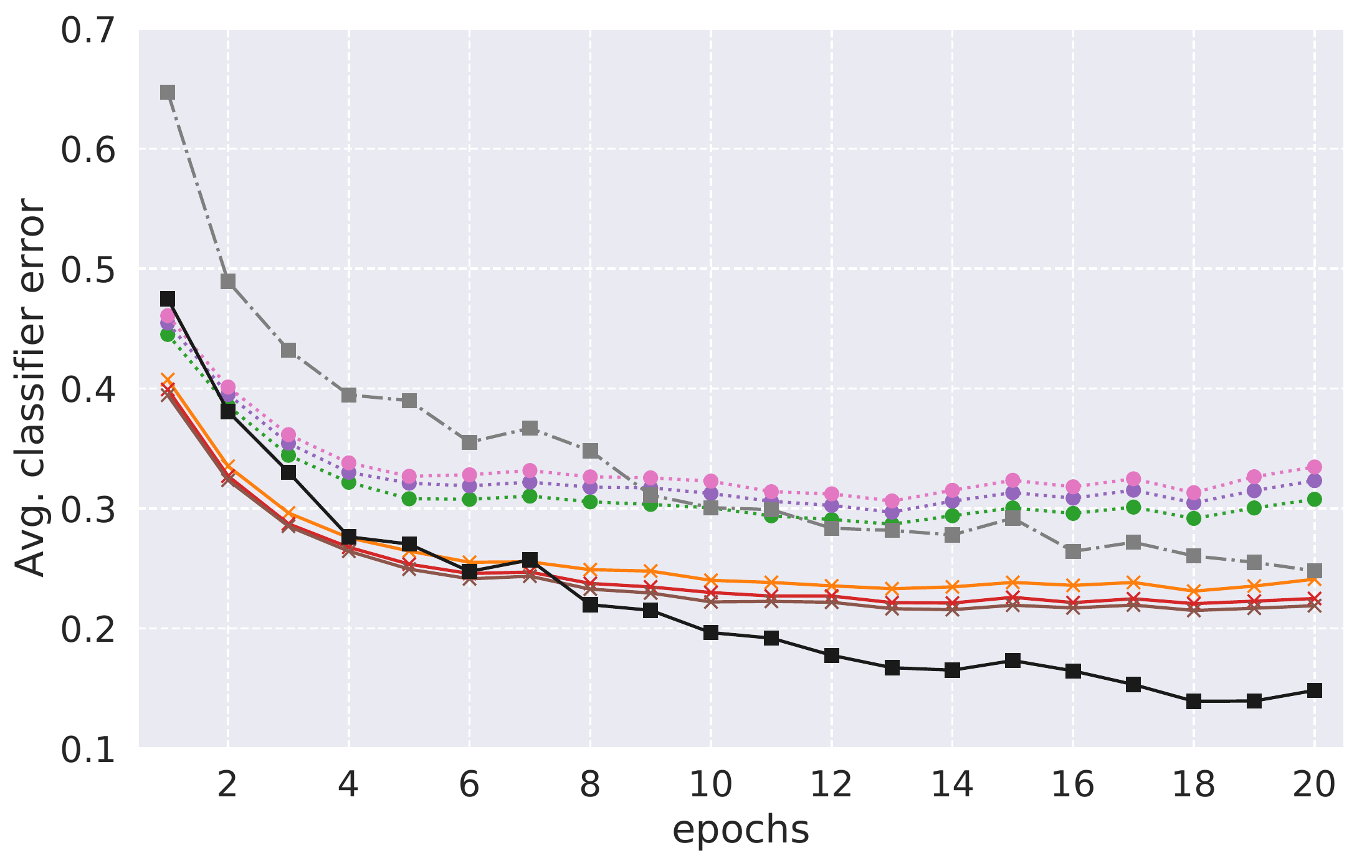}
    \end{subfigure}
    
   \begin{subfigure}{0.32\textwidth}
   \centering
    \includegraphics[width=\textwidth]{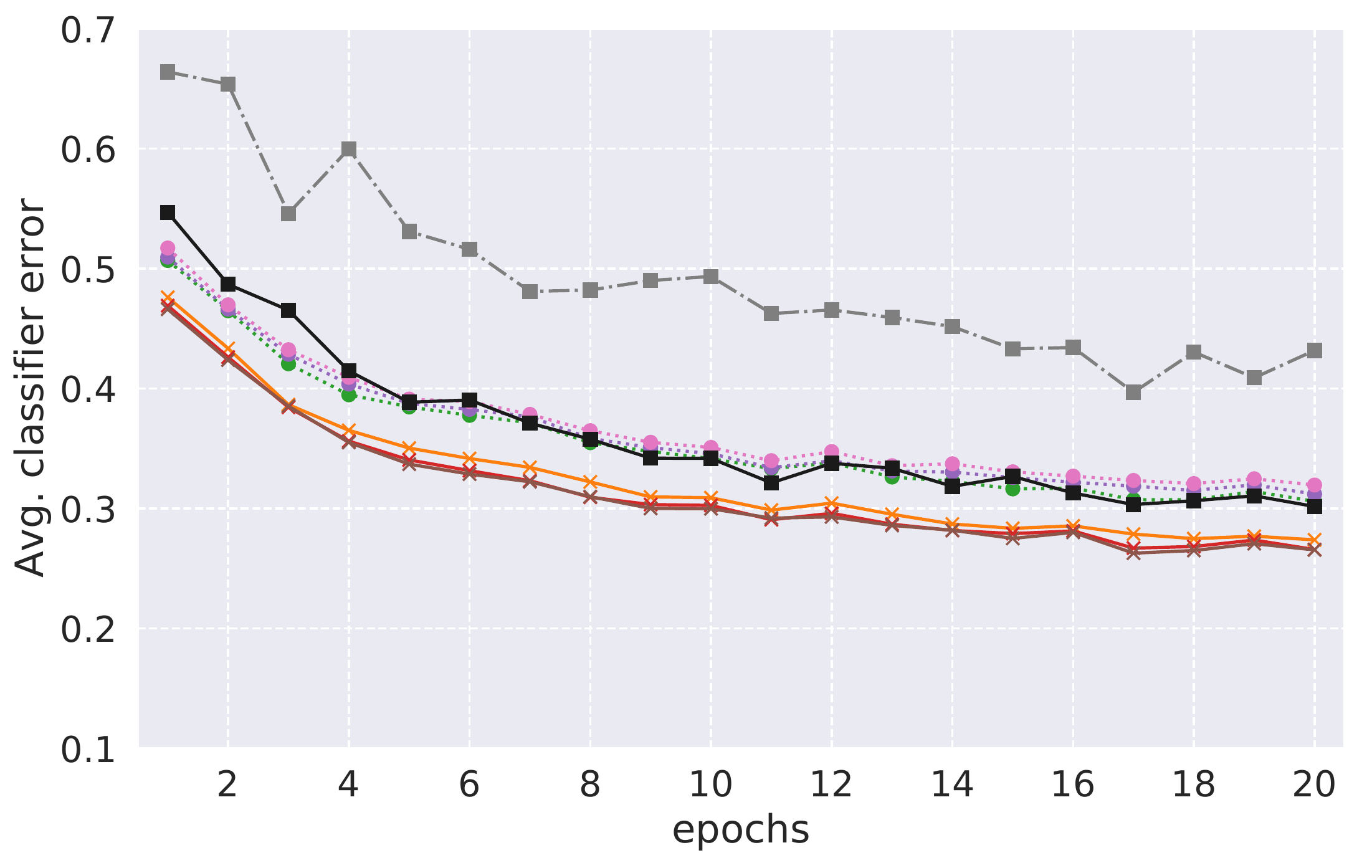}
    \caption{Underparameterized}
    \end{subfigure}
    \begin{subfigure}{0.32\textwidth}
    \centering
    \includegraphics[width=\textwidth]{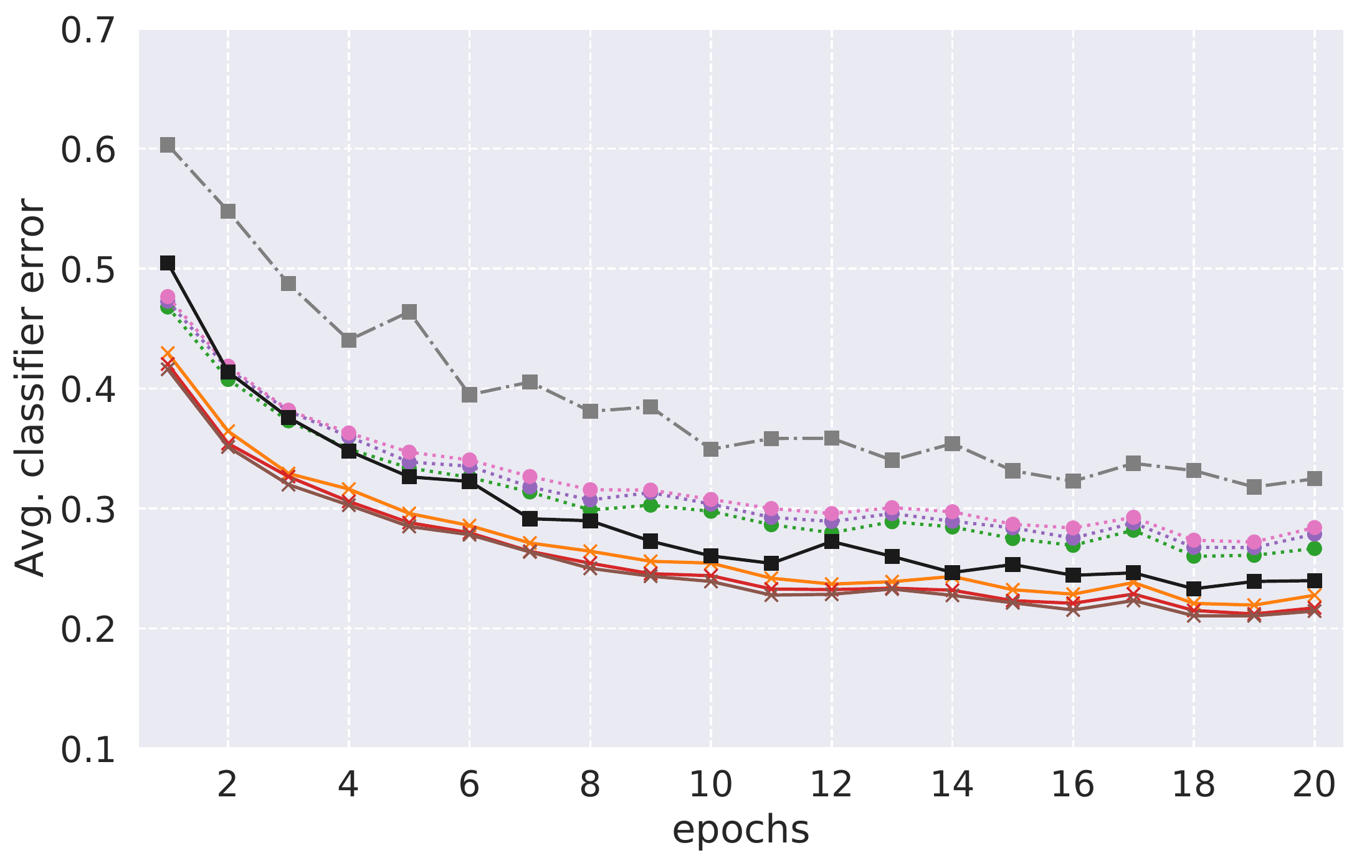}
    \caption{Regularized}
    \end{subfigure}
    \begin{subfigure}{0.32\textwidth}
    \centering
    \includegraphics[width=\textwidth]{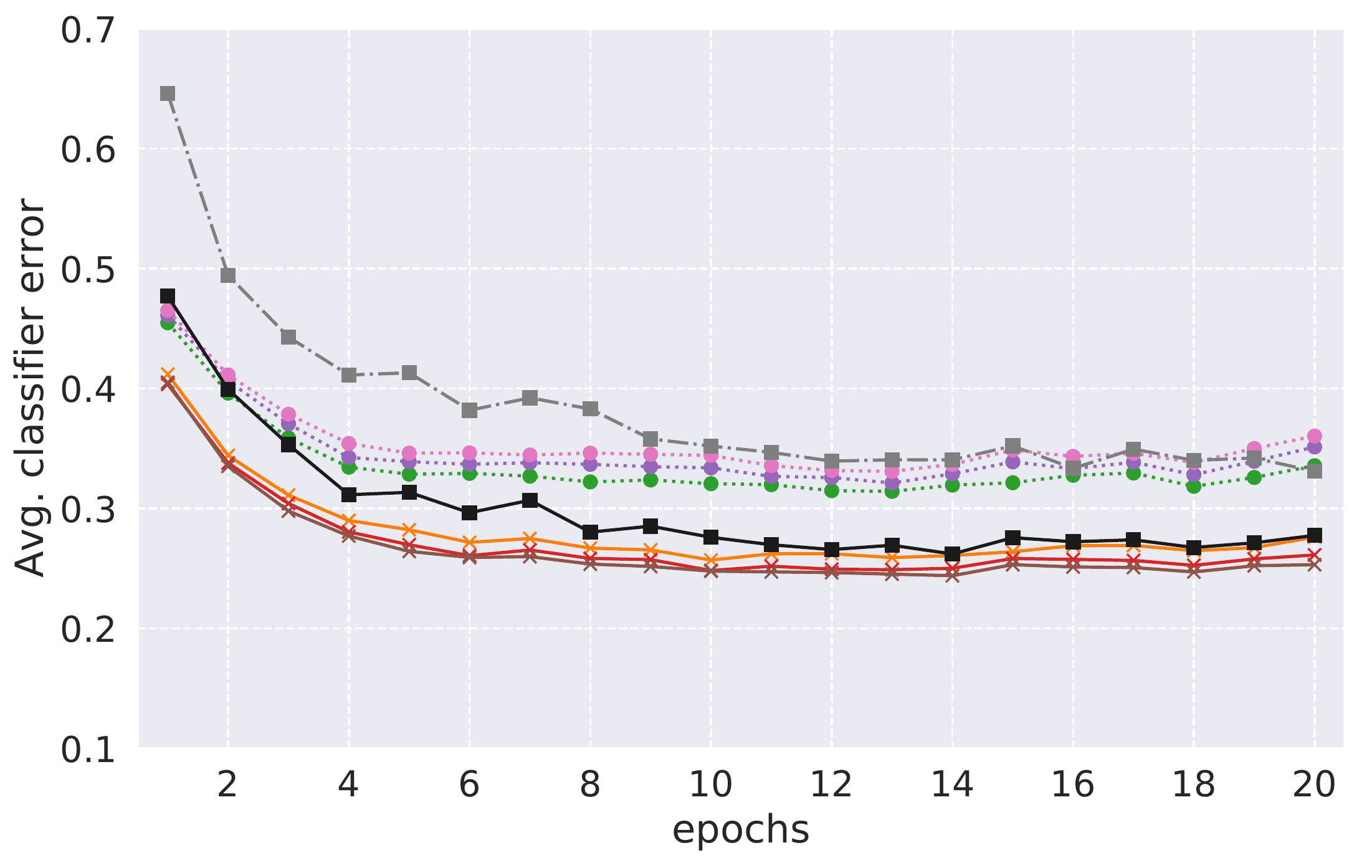}
    \caption{Overfit}
    \end{subfigure}
    \caption{Misclassification error ($\xi$) using fully connected softmax classifier model and interpolating classifiers (weighted KNN, NNK) for different values of $k$ parameter at each training epoch on CIFAR-10 dataset. The training data (Top) and test data (Bottom) performance for three different model settings is shown in each column. NNK classification consistently performs as well as the actual model with classification error decreasing slightly as $k$ increases. On the contrary, a weighted KNN model error increases for increasing $k$ showing robustness issues. The classification error gap between DNN model and leave one out DeepNNK model for train data is suggestive of underfitting ($\xi_{\text{NNK}} < \xi_{\text{model}}$) and overfitting ($\xi_{\text{NNK}} > \xi_{\text{model}}$). We claim a good model to be one where the performance of the model agrees with the local  NNK model.}
    \label{fig:overfitting_study}
\end{figure*}
We consider a simple 7 layer network comprising 4 convolution layers with reLU activations, 2 max-pool layers and 1 full connected softmax layer to demonstrate model selection. We evaluate the test performance and stability of proposed NNK classifier and compare it to weighted KNN (wiNN) approach for different values of $k$ and 5-fold cross validated linear SVM\footnote{Similar to neighborhood methods, the last layer is replaced and trained at each evaluation using a LIBLINEAR SVM \cite{fan2008liblinear} with minimal $\ell_2$ regularization. We use the default library setting for other parameters of the SVM.} for three different network settings, 
\begin{itemize}
    \item Regularized model training: We used 32 depth channels for each convolution layer with dropout at each convolution layer with keep probability 0.9. The data was augmented with random horizontal flips. 
    \item Underparametrized model: We keep the same model structure and regularization as in the regularized model but reduce the number of depth channels to 16, equivalently the number of parameters of the model by half.
    \item Overfit model: To simulate overfitting, we remove data augmentation and dropout regularization in the regularized model while training for the same number of epochs.
\end{itemize}
\begin{figure}[tbp]
    \centering
    \includegraphics[width=0.48\textwidth]{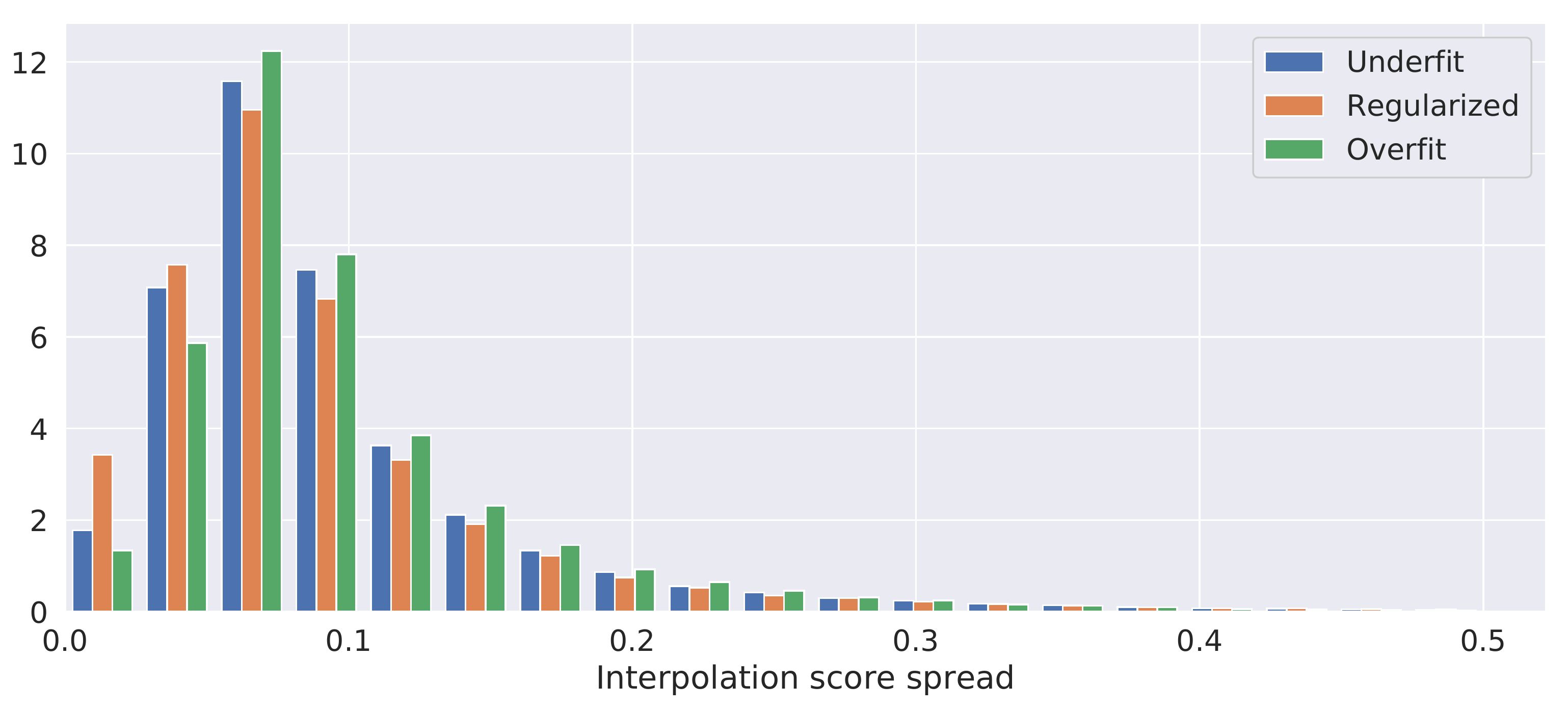}
    \caption{Histogram (normalized) of leave one out interpolation score  after 20 epochs with $k=50$ on CIFAR-10. While the network performance on train dataset is considerably different in each setting, we see that the change in the interpolation~(classification) landscape associated with the input data is minimal which suggests a small change in generalization of the models. The spread is more shifted towards zero in regularized model indicative of a smoother classification surface.}
    \label{fig:intepolation_score_spread}
\end{figure}
\Figref{fig:overfitting_study} shows the difference in performance between our method and weighted KNN (wiNN), in particular, while the proposed DeepNNK method improves marginally with larger values of $k$, the wiNN approach degrades in performance. This can be explained by the fact that NNK accommodates new neighbors only if they belong to a new direction in space that improves its interpolation unlike its KNN counterparts which simply interpolate with \textit{all} $k$ neighbors. More importantly, we observe that while NNK method performs on par if not better than the original classifier with SVM last layer, its LOO performance is a better indicator of the generalization as opposed to the empirical model performance on training data. One can clearly identify a regularized model with better stability by observing the deviation in performance between training and the LOO estimate using our proposed method. Note that the cross validated linear SVM model performed sub-optimally in all settings, which suggests that it is unable to capture the complexity of input data or the generalization difference in different models. The choice of better model is reinforced again in \Figref{fig:intepolation_score_spread}, where we observe that the histogram of interpolation spread for regularized model is more shifted towards zero relative to under-parameterized and overfit models. Note that, the shift is minimal which is expected as the different in test error associated with each model is small as well.

\begin{figure}[htbp]
    \centering
    \begin{subfigure}{0.48\textwidth}
    \includegraphics[width=\textwidth]{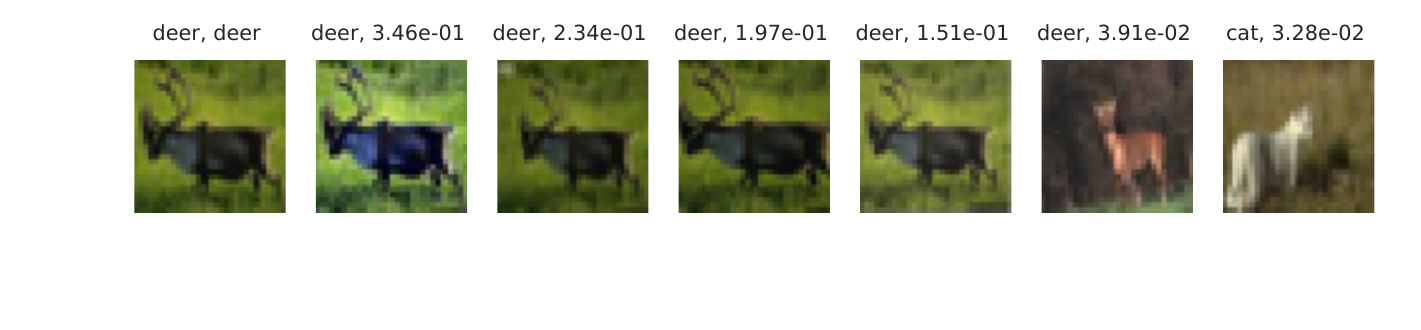}
    \end{subfigure}
    \begin{subfigure}{0.48\textwidth}
    \includegraphics[width=\textwidth]{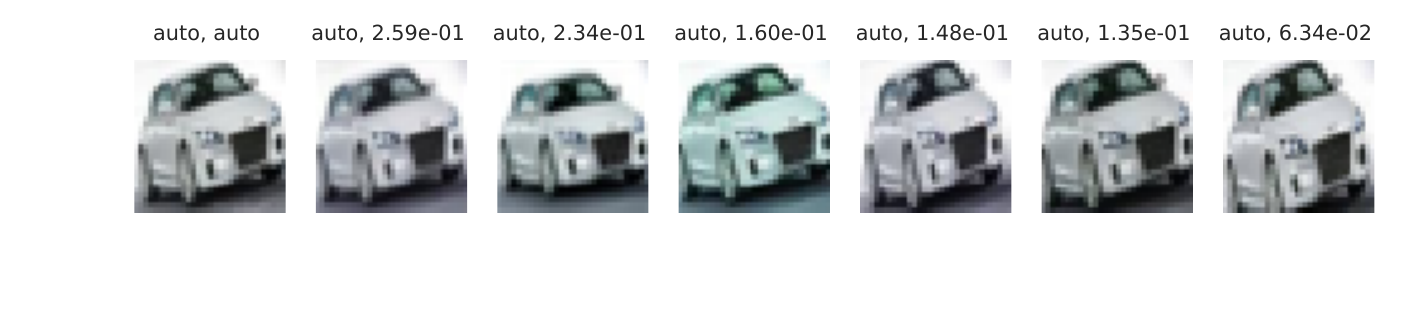}
    \end{subfigure}
    \caption{Two test examples (first image in each set) with identified NNK  neighbors from CIFAR-10 for $k=50$. We show the assigned and predicted label for the test sample, and assigned label and NNK weight for neighboring (and influential) training instances. Though we were able to identify the correct label for the test sample, one might want to question such duplicates in dataset for downstream applications.}
    \label{fig:explainability_example_duplicates}
\end{figure}
We next present a few interpretability results, showing our framework's ability to capture training instances that are influential in prediction. 
Neighbors selected from training data for interpolation by DeepNNK  can be used as examples to explain the neural network decision. This \emph{intepretability} can be crucial to problems with transparency requirements by allowing an observer to interpret the region around a test representation as evidence. 

In \Figref{fig:explainability_example_duplicates}, we show examples in the training dataset that are responsible for a prediction using the simple regularized model defined previously. Machine models and the datasets used for their training often contain biases, such as repeated instances with small perturbations for class balance, which are often undesirable for applications where fairness is important. DeepNNK framework can help understand and eliminate sources of bias by allowing practitioners to identify the limitations of their current system in a semi supervised fashion. 
\Figref{fig:explainability_example_brittleness} shows another application of NNK where the fragile nature of a model over certain training images is brought to light using interpolation spread of \eqref{eq:interpolation_score_spread}. These experiments show the possibility of DeepNNK framework being used as a debugging tool in deep learning. 
\begin{figure}[tbp]
    \centering
    \begin{subfigure}{0.48\textwidth}
    \includegraphics[trim={0cm, 0cm, 0cm, 0.2cm}, clip,width=\textwidth]{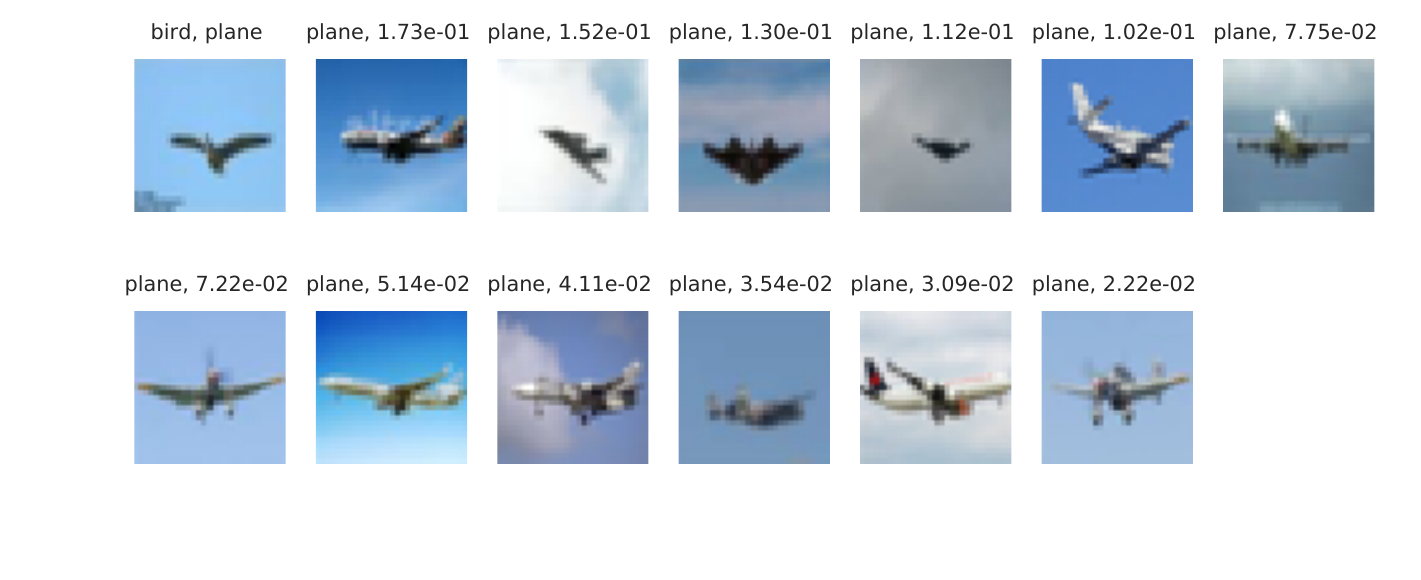}
    \end{subfigure}
    \begin{subfigure}{0.48\textwidth}
    \includegraphics[width=\textwidth]{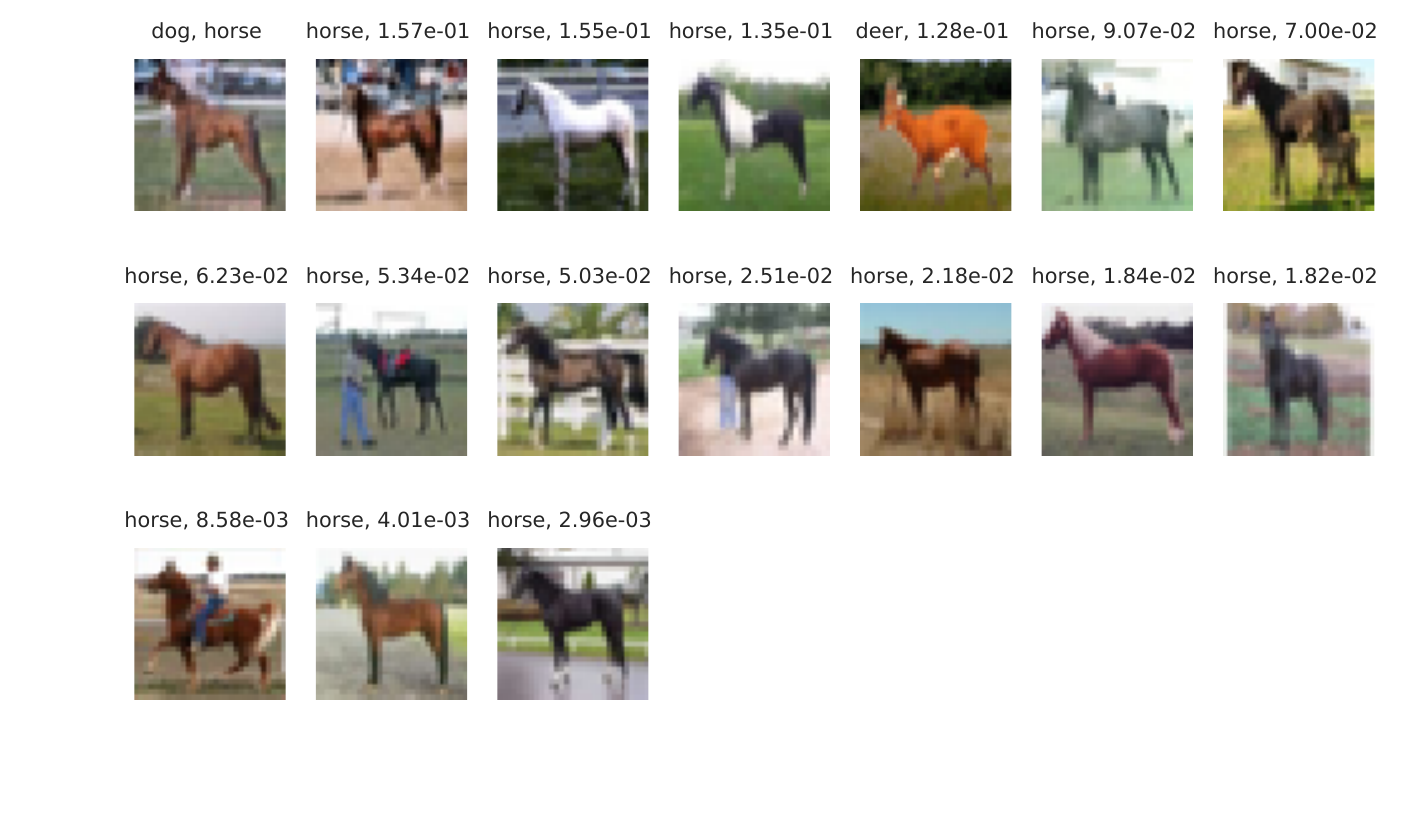}
    \end{subfigure}
    \caption{Two training set examples (first images in each set) observed to have maximum discrepancy in LOO NNK interpolation score, as well as their respective neighbors, for $k=50$. We show the assigned and predicted label for the image being classified, and assigned label and NNK weight for the neighbors. These instances exemplify the possible brittleness of the classification model, which can better inform a user about the limits of the model they are working with.}
    \label{fig:explainability_example_brittleness}
\end{figure}

Finally, we present experimental analysis of generative and adversarial images from the perspective of NNK interpolation. We study these methods using our DeepNNK framework applied on a Wide-ResNet-28-10 \cite{zagoruyko2016wide} architecture trained with autoaugment \cite{cubuk2019autoaugment}\footnote{DeepNNK achieves $97.3\%$ test accuracy on CIFAR-10 similar to that of the original network.}. 
\begin{figure}[tbp]
    \centering
    \begin{subfigure}{0.24\textwidth}
    \includegraphics[width=\textwidth]{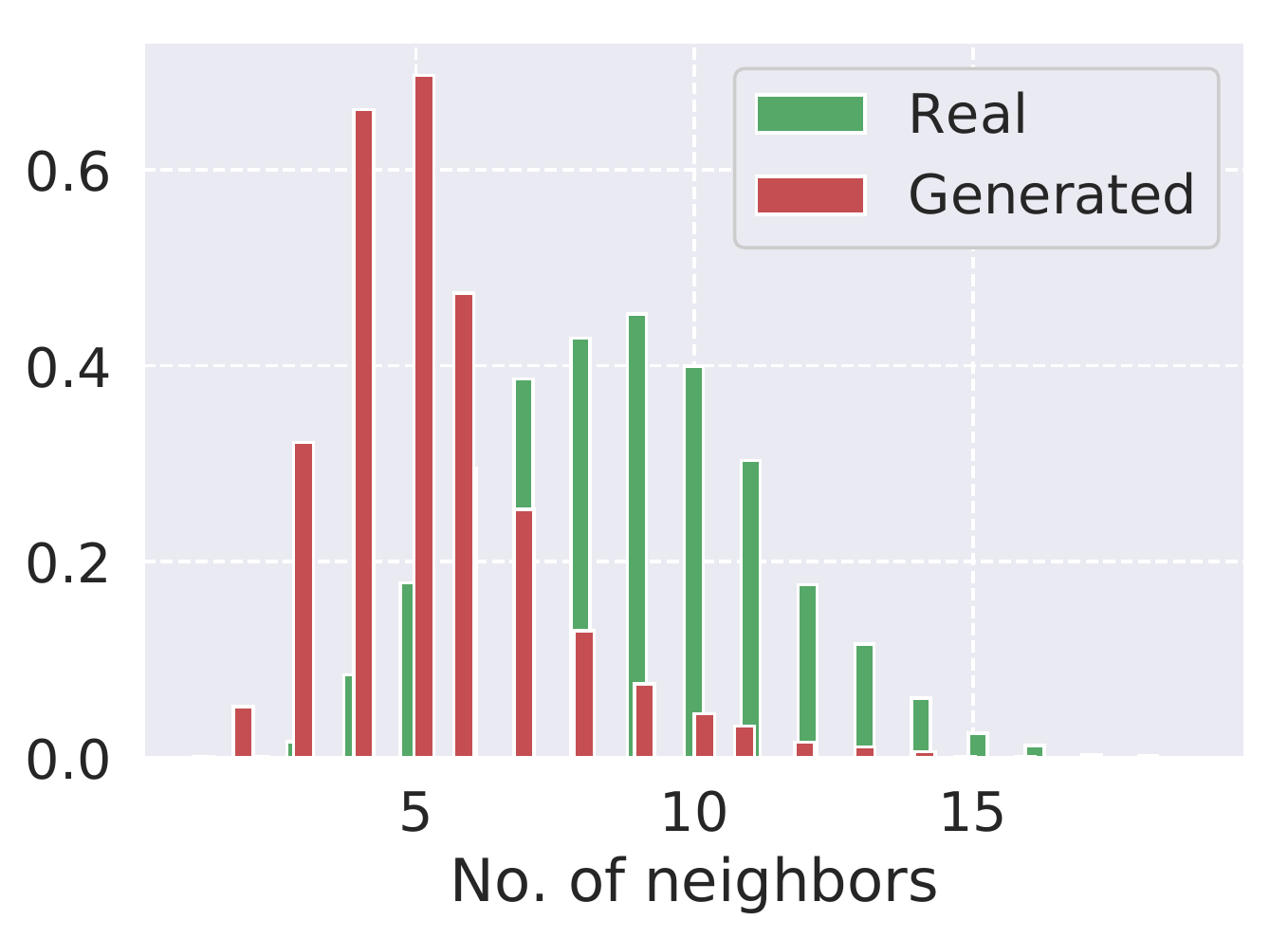}
    \caption{}
    \end{subfigure}
    \begin{subfigure}{0.24\textwidth}
    \includegraphics[width=\textwidth]{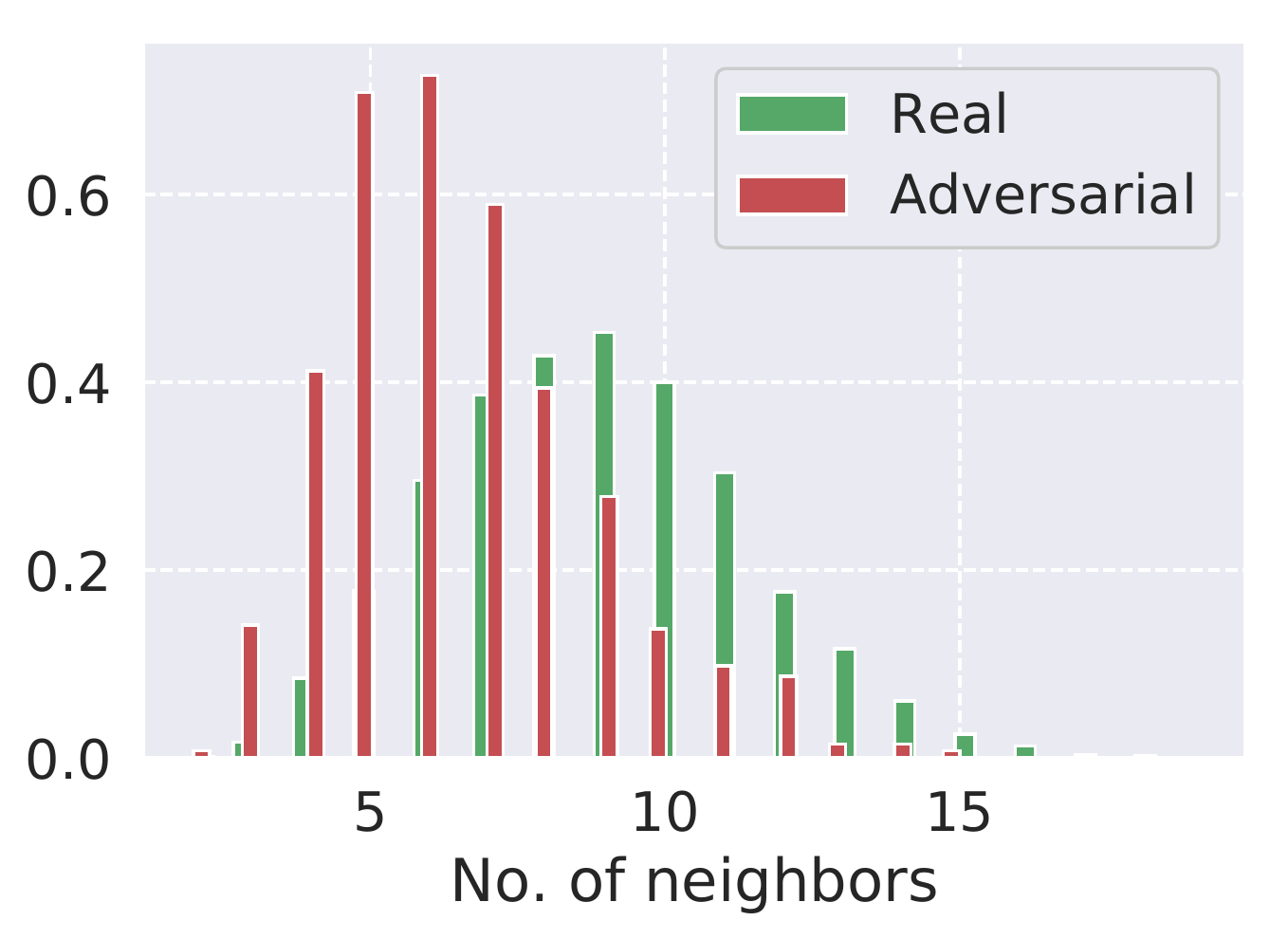}
    \caption{}
    \end{subfigure}
    \caption{Histogram (normalized) of number of neighbors for (a) generated images \cite{song2019generative}, (b) black box adversarial images \cite{croce2020reliable} and actual CIFAR-10 images. We see that generated and adversarial images on average have fewer neighbors than real images suggestive of the fact these examples often fall in interpolating regions where few train images span the space. An adversarial image is born when these areas of interpolation belong to unstable regions in the classification surface. }
    \label{fig:generative_adversarial_neighbors}
\end{figure}
\begin{figure}[tbp]
    \centering
    \begin{subfigure}{0.47\textwidth}
    \includegraphics[width=\textwidth]{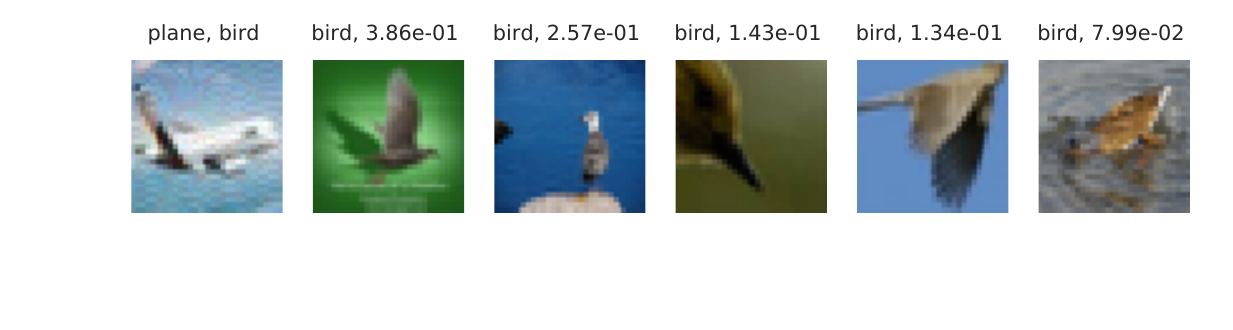}
    \end{subfigure}
    \begin{subfigure}{0.47\textwidth}
    \includegraphics[width=\textwidth]{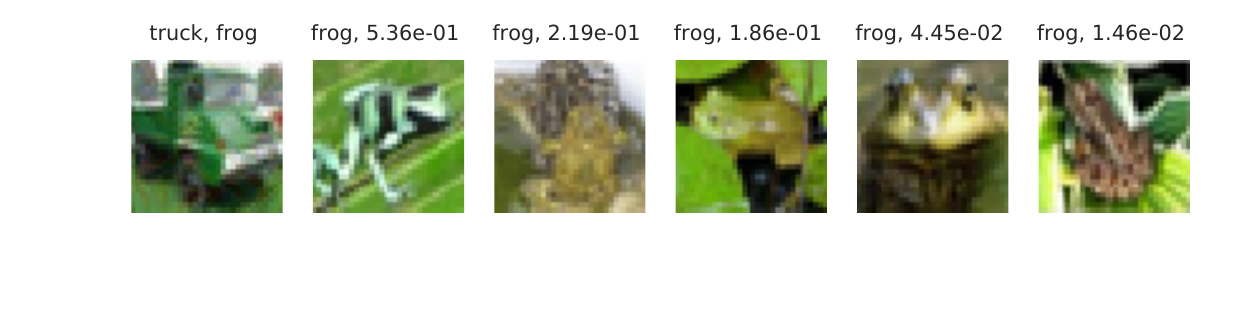}
    \end{subfigure}
    \caption{Selected black box adversarial examples (first image) and their NNK neighbors from CIFAR-10 training dataset with $k=50$. Though changes in the input image is imperceptible to a human eye, one can better characterize a prediction using NNK by observing the interpolation region of the test instance.}
    \label{fig:explainability_example_adversarial}
\end{figure}
Generative and adversarial examples leverage interpolation spaces where a model (discriminator in the case of generative images or a classifier in the case of black box attacks) is influenced by a smaller number of neighboring points. 
This is made evident in \Figref{fig:generative_adversarial_neighbors} where we see that the number of neighbors in the case of generative and adversarial images is on average smaller than that of real images. 
We conjecture that this is a property of interpolation where realistic images can be obtained in compact interpolation neighborhoods with perturbations along extrapolating, mislabeled sample directions producing adversarial images. 
Though the adversarial perturbations in the input image space is visually indistinguishable, the change in the embedding of the adversarial image in the interpolation space is significantly larger, in some cases as in \Figref{fig:explainability_example_adversarial}, belonging to regions completely different from its class. 
\section{Discussion and Future Work}
We discussed various ideas, theoretical and practical, from model interpretability and generalization to adversarial, generative examples. Underlying each of these applications is a single common tool, a local polytope interpolation, whose neighborhood support is determined automatically and is dependent on the input data. 
DeepNNK provides a way to incorporate recent theoretical works on interpolation and leads to better understanding of deep learning models by tracing their predictions back to the input data it was trained on.
We hope these attempts help progress neural networks to more real world scenarios and motivate further studies, methods of diagnosing machine models from the lens of the training data.

We conclude with few open thoughts and questions. 
\begin{itemize}
    \item 
    Leave one out is a particular instance of the more general problem of how a learning system predicts in response to perturbations of its parameters and data. We believe other kind of perturbations could help better understand neural networks, statistically as well as numerically. 
    \item 
    The error in data interpolation of \eqref{eq:nnk_kernel_objective} can be observed as that of data noise or alternatively error arising due to absence of examples in some directions (extrapolation). In either scenario, this error can be used to characterize a notion of distance between the data being interpolated and that available for interpolation. 
    We believe such a measure could help identify datasets shifts in an unsupervised manner with possible applications in domain adaptation, transfer learning.
\end{itemize}
\bibliographystyle{IEEEtran}
\bibliography{root}

\onecolumn
\section*{Supplementary Material}
\subsection{Proof of Proposition \plainref{prop:nnk_interpolation}}

\begin{lemma}[\cite{shekkizhar2019graph}]
\label{lemma:active_set_solution}
The quadratic optimization problem of (\ref{eq:nnk_kernel_objective}) satisfies active constraints set\footnote{In constrained optimization problems, some constraints will be strongly binding, i.e., the solution to optimization at these elements will be zero to satisfy the KKT condition of optimality. These constraints are referred to as \textbf{active constraints}, knowledge of which helps reduce the problem size as one can focus on the inactive subset that requires optimization. The constraints that are active at a current feasible solution will remain active in the optimal solution \cite{bernau1990active}.}. 
Given a partition $\{\vtheta_P, \vtheta_{\bar{P}}\}$, where $\vtheta_P > 0$ (inactive) and $\vtheta_{\bar{P}} = 0$ (active), the solution $[\vtheta_P \; \vtheta_{\bar{P}}]^\top$ is the optimal solution provided:
\begin{align*}
    \mK_{P,P}\vtheta_P = \mK_{P,*} \quad \text{and} \quad
    \mK_{P,{\bar{P}}}^\top\vtheta_P - \mK_{{\bar{P}},*} \geq \bf0
\end{align*}
\end{lemma}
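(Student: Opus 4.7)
The plan is to verify the claim by invoking the Karush--Kuhn--Tucker (KKT) conditions for the convex quadratic program in \eqref{eq:nnk_kernel_objective}. Since $\gK$ is a Mercer kernel, the Gram matrix $\mK_{S,S}$ is positive semidefinite, and since the only constraints are the non-negativity box constraints $\vtheta \geq \vzero$ (which form a polyhedral, hence linear, feasible set), Slater's condition is trivially satisfied and the KKT conditions are both necessary and sufficient for global optimality. The plan is therefore to write down these conditions explicitly and then show that they reduce to exactly the two stated relations when we split coordinates according to the proposed partition $\{P, \bar{P}\}$.

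First, I would introduce a dual multiplier $\vlambda \geq \vzero$ for the constraint $\vtheta \geq \vzero$ and form the Lagrangian
\begin{equation*}
\gL(\vtheta, \vlambda) = 1 - 2\vtheta^\top \mK_{S,*} + \vtheta^\top \mK_{S,S}\vtheta - \vlambda^\top \vtheta.
\end{equation*}
Setting $\nabla_{\vtheta}\gL = \vzero$ yields the stationarity condition $2\mK_{S,S}\vtheta - 2\mK_{S,*} = \vlambda$, which together with primal feasibility ($\vtheta \geq \vzero$), dual feasibility ($\vlambda \geq \vzero$), and complementary slackness ($\lambda_i \theta_i = 0$ for every $i$) constitute the full KKT system.

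Next, I would apply the proposed partition. On the inactive block $P$, the hypothesis $\vtheta_P > \vzero$ forces $\vlambda_P = \vzero$ by complementary slackness, so the $P$-block of stationarity becomes $\mK_{P,P}\vtheta_P + \mK_{P,\bar{P}}\vtheta_{\bar{P}} = \mK_{P,*}$; using $\vtheta_{\bar{P}} = \vzero$ this simplifies to $\mK_{P,P}\vtheta_P = \mK_{P,*}$, matching the first stated condition. On the active block $\bar{P}$, complementary slackness is automatic (since $\vtheta_{\bar{P}} = \vzero$), and the $\bar{P}$-block of stationarity reads $\vlambda_{\bar{P}} = 2(\mK_{\bar{P},P}\vtheta_P - \mK_{\bar{P},*})$, so dual feasibility $\vlambda_{\bar{P}} \geq \vzero$ is equivalent to $\mK_{P,\bar{P}}^\top \vtheta_P - \mK_{\bar{P},*} \geq \vzero$, which is the second stated condition. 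Thus the two hypothesized inequalities, together with the assumed signs of $\vtheta_P$ and $\vtheta_{\bar{P}}$, exactly recover the full KKT system.

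Finally, I would invoke convexity: because the objective is convex quadratic (from $\mK_{S,S} \succeq 0$) over a convex feasible set, satisfying KKT is sufficient for global optimality, so $[\vtheta_P;\ \vtheta_{\bar{P}}]^\top$ is an optimal solution. There is no real obstacle here; the only subtle point is that if $\mK_{S,S}$ is only positive semidefinite (not strictly positive definite), the optimizer in $\vtheta$ need not be unique, but any KKT point is still a global minimizer, which is all the lemma asserts. This observation also justifies the footnoted comment that active constraints identified at a feasible KKT solution persist at optimality, since the affine characterization above pins down the inactive-block values uniquely from $\mK_{P,P}\vtheta_P = \mK_{P,*}$ whenever $\mK_{P,P}$ is invertible on the reduced block.
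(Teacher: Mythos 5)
Your proof is correct. The paper itself gives no proof of this lemma---it is imported by citation from \cite{shekkizhar2019graph}---and the argument you supply is the standard one: write the KKT system for the non-negativity-constrained convex QP, use complementary slackness on the block $P$ to get $\mK_{P,P}\vtheta_P = \mK_{P,*}$, and identify dual feasibility on the block $\bar{P}$ with $\mK_{P,\bar{P}}^\top\vtheta_P - \mK_{\bar{P},*} \geq \vzero$ (using symmetry of $\mK_{S,S}$), with convexity of the objective ($\mK_{S,S}\succeq 0$ for a Mercer kernel) and linearity of the constraints making KKT sufficient for global optimality. Your closing remark on non-uniqueness under mere positive semidefiniteness is also consistent with the paper's own caveat, stated just after the lemma, that $P$ is the non-zero support if and only if $\mK_{P,P}$ is full rank and $\vtheta_P > 0$.
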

Moreover, the set $P$ corresponds to non zero support of the constrained problem if and only if $\mK_{P, P}$ is full rank and $\vtheta_{P} > 0$ \cite{nguyen2019non}. Thus, the solution to (\ref{eq:nnk_kernel_objective}) is obtained as
\begin{align}
 \vtheta_S = [\vtheta_P \; \vtheta_{\bar{P}}]^\top = [(\mK_{P,P})^{-1}\mK_{P,*} \;\; \bf0] \label{eq:active_set_solution}
\end{align}
\begin{proof}[Proof of Proposition \ref{prop:nnk_interpolation}]
 Let $\mPhi_P$ correspond to the matrix containing the $\hat{k}$ neighbors with non zero data interpolation weights and $\vy_P$ the associated labels. The kernel space linear interpolation estimator is obtained by solving
 \begin{align}
     \tilde{\valpha} = \argmin_{\valpha} \sum_{i=1}^{\hat{k}}(y_i - \valpha^\top\vphi(\vx_i)) = \mK_{P,P}^{-1}\mPhi_P \vy_P
 \end{align}
 Therefore, using matrix identity and \eqref{eq:active_set_solution} resulting from lemma \ref{lemma:active_set_solution},  the estimate $y$ at $\vx$ is obtained as 
\begin{align*}
 y=\tilde{\valpha}^\top\vphi(\vx) = \vy_P^\top\mPhi_P^\top\mK_{P,P}^{-1}\vphi(\vx)
 = \vy_P^\top\mK_{P,P}^{-1}\mPhi_P^\top\vphi(\vx) =  \vy_P^\top\mK_{P,P}^{-1}\mK_{P,*} = \vy_P^\top\vtheta_P = \sum_{i=1}^{\hat{k}}\vtheta_{i}\;y_i
\end{align*}
\end{proof}

\subsection{Proof of Theorem \plainref{thm:excess_mean_sq_risk}}
\begin{proof}
The proof follows a similar argument as in the simplicial interpolation bound in \cite{belkin2018overfitting}.
The expected excess mean squared risk can be partitioned based on disjoint sets as\footnote{All expectation in this proof are condition on $D_{train}$. For the sake of brevity, we do not make this conditioning explicit in our statments.} 
\begin{align}
    \E_X[(\hat{\eta}(\vx) - \eta(\vx))^2] &= \E_X[(\hat{\eta}(\vx) - \eta(\vx))^2 | X \notin \gC] P(X \notin \gC) + \E_X[(\hat{\eta}(\vx) - \eta(\vx))^2 | X \in \gC] P(X \in \gC) \nonumber \\
    &\leq \E_X[(\hat{\eta}(\vx) - \eta(\vx))^2 | X \notin \gC] P(X \notin \gC) + \E_X[(\hat{\eta}(\vx) - \eta(\vx))^2 | X \in \gC] \label{eq:thm1_risk_parition}
\end{align}
For points outside the convex hull, NNK extrapolates labels and no guarantees can be made on the regression without further assumptions. Thus, $(\hat{\eta}(\vx) - \eta(\vx))^2 \leq 1$ which reduces the first term on the left of \eqref{eq:thm1_risk_parition} to that of theorem.


Let $\vtheta_{\hat{k}}$ be the solution to NNK interpolation objective  \plainref{eq:nnk_kernel_objective}. Let $w_i = \frac{\vtheta_{i}}{ \sum_{i=1}^{\hat{k}}\vtheta_{i}}$ denote the weight normalized values. The normalized weights follow a $\text{Dirichlet(1, 1 \dots 1)}$ distribution with $\hat{k}$ concentration parameters. 
\begin{align}
    \hat{\eta}(\vx) - \eta(\vx) &=  \sum_{i=1}^{\hat{k}}w_i(y_i - \eta(\vx)) 
    = \sum_{i=1}^{\hat{k}}w_i(y_i - \eta(\vx_i) + \eta(\vx_i) - \eta(\vx)) =\sum_{i=1}^{\hat{k}}w_i \epsilon_i + \sum_{i=1}^{\hat{k}}w_i b_i \label{eq:bias_error_eta_decomposition}
\end{align}
where $\epsilon_i = y_i - \eta(\vx_i)$ corresponds to Bayesian estimator errors in the training data and $b_i =  \eta(\vx_i) - \eta(\vx)$ is related to bias. By smoothness assumption on $\eta$ we have
\begin{align}
    |b_i| = |\eta(\vx_i) - \eta(\vx)| \leq A||\vphi(\vx_i) - \vphi(\vx)||^\alpha \leq A\delta^\alpha \label{eq:bias_bound}
\end{align}
Since $b_i$ and $\epsilon_i$ are independent, we have
\begin{align}
    \E_X[(\hat{\eta}(\vx) - \eta(\vx))^2 | X \in \gC] = \E_X[\left(\sum_{i=1}^{\hat{k}}w_i \epsilon_i\right)^2 | X \in \gC] + \E_X[\left(\sum_{i=1}^{\hat{k}}w_i b_i\right)^2| X \in \gC] \label{eq:bias_error_risk_decomposition}
\end{align}
By Jensen's inequality, $\left(\sum_{i=1}^{\hat{k}}w_i b_i\right)^2 \leq \sum_{i=1}^{\hat{k}}w_i b_i^2$ and bound in \eqref{eq:bias_bound},
\begin{align}
    \E_X[\left(\sum_{i=1}^{\hat{k}}w_i b_i\right)^2| X \in \gC] \leq \E_X[\sum_{i=1}^{\hat{k}}w_i b_i^2 | X \in \gC] \leq \E_X[\sum_{i=1}^{\hat{k}}w_i A^2\delta^{2\alpha} | X \in \gC] = A^2\delta^{2\alpha} \label{eq:bias_reduced_form}
\end{align}
Let $\nu(\vx) = var(Y|X=\vx)$. Under independence assumption on noise, the term with $\epsilon$ in \eqref{eq:bias_error_risk_decomposition} can be rewritten as
\begin{align*}
    \E_X[\left(\sum_{i=1}^{\hat{k}}w_i \epsilon_i\right)^2 | X \in \gC] &= \E_X[\sum_{i=1}^{\hat{k}}w_i^2 \epsilon_i^2 | X \in \gC] = \E_K\left[\sum_{i=1}^{\hat{k}}\E_{X|K}[w_i^2| X \in \gC] \E_{X|K}[\epsilon_i^2| X \in \gC]\right] 
    \\
    &\leq \E_K\left[\frac{2}{(\hat{k} + 1)(\hat{k})} \sum_{i=1}^{\hat{k}} \nu(\vx_i)\right]
    \leq \E_K\left[\frac{2}{(\hat{k} + 1)(\hat{k})} \sum_{i=1}^{\hat{k}} \nu(\vx) + |\nu(\vx_i) - \nu(\vx)|\right]
\end{align*}
where we use the fact that $w_i$ follows Dirichlet distribution. Now, the smoothness assumption on $var(Y|X)$ allows us to bound
\begin{align}
    |\nu(\vx_i) - \nu(\vx)| \leq A'||\vphi(\vx_i) - \vphi(\vx)||^{\alpha'} \leq A'\delta^{\alpha'}\\
    \implies \E_X[\left(\sum_{i=1}^{\hat{k}}w_i \epsilon_i\right)^2 | X \in \gC] \leq \frac{2}{(\E_K[\hat{k}] + 1)} \left(\nu(\vx) + A'\delta^{\alpha'}\right) \label{eq:var_reduced_form}
\end{align}
Combining with \eqref{eq:bias_reduced_form}, the risk bound for points within the convex hull of training data is obtained as
\begin{align}
    \E_X[(\hat{\eta}(\vx) - \eta(\vx))^2 | X \in \gC] \leq  A^2\delta^{2\alpha} +  \frac{2}{(\E_K[\hat{k}] + 1)} \left(\nu(\vx) + A'\delta^{\alpha'}\right) \label{eq:second_term_thm1}
\end{align}
\Eqref{eq:second_term_thm1} along with the reduction for points outside the convex hull $\gC$ obtained earlier gives the excess risk bound and concludes the proof.
\end{proof}

\subsection{Proof of Corollary \plainref{coroll:excess_mean_sq_risk_convergence}}
\begin{proof}
The nearest neighbor convergence lemma of \cite{cover1967nearest} states that for an i.i.d sequence of random variables $\gD = \{\vx_1, \vx_2 \dots \vx_N\}$ in $\R^d$, the nearest neighbor of $\vx$ from the set $\gD$ converges in probability, $NN(\vx) \rightarrow_p \vx$. Equivalently, this would correspond to convergence in kernel representation of the data points. Thus, the solution to NNK data interpolation objective is reduced to $1$-nearest neighbor interpolation with $\E_K[\hat{k}] = 1$ and $\limsup_{N\rightarrow \infty}\delta = 0$.
Now, under the assumption that the $supp(\mu)$ belongs to a polytope, the first term on the right of \eqref{eq:excess_sq_risk} vanishes i.e.,
$\limsup_{N\rightarrow\infty}E_X[\mu(\R^d \backslash \gC)] = 0$
\end{proof}
\subsection{Proof of Corollary \plainref{coroll:classifier_risk_convergence}}
\begin{proof}
The excess classification risk associated with this classifier is related the regression risk as 
\begin{align}
    \E_X[\gR(\hat{f}(\vx)) - \gR(f(\vx))] \leq \E_X[\sI(\hat{f}(\vx) \neq f(\vx))] \leq 2 \E_X[|\hat{\eta}(\vx) - \eta(\vx)|] \label{eq:classifier_risk_sq_risk_relation}
\end{align}
From Corollary \ref{coroll:excess_mean_sq_risk_convergence}, we have
\begin{align*}
    \limsup_{N\rightarrow\infty} \E_X[(\hat{\eta}(\vx) - \eta(\vx))^2] \leq \E_X[(Y - \eta(\vx)^2]
\end{align*}
By Jensen's inequality
\begin{align}
   \limsup_{N\rightarrow\infty} \left(\E_X[|\hat{\eta}(\vx) - \eta(\vx)|]\right)^2 \leq \limsup_{N\rightarrow\infty} \E_X[(\hat{\eta}(\vx) - \eta(\vx))^2] \label{eq:risk_jensen_inequality}
\end{align}
Combining with \eqref{eq:classifier_risk_sq_risk_relation} gives the required risk bound.
\end{proof}
\subsection{Proof of Theorem \plainref{thm:loo_bound_theorem}}
\begin{proof}
The proof is based on the $k$-nearest neighbor result from Theorem 1 in \cite{devroye1979deleted} which states that 
\begin{align}
   P(|\gR_{loo}(\hat{\eta}|\gD_{train}) - \gR_{gen}(\hat{\eta})| > \epsilon) \leq  2e^{-N\epsilon^2/18} +6e^{-N\epsilon^3/\left(108k(2 + \gamma)\right)}
\end{align}
As in \cite{devroye1979deleted}, where the result is extended based on the $1$-nearest neighbor, here it suffices to replace $k$ by $\E_K[\hat{k}]$ since each data point on average cannot be NNK neighbors to more than $\E_K[\hat{k}]\gamma + 2 \leq \E_K[\hat{k}](\gamma + 2)$ data points.
\end{proof}
\end{document}